\documentclass[journal]{IEEEtran}
\newcommand{\model}{InductWave}
\newcommand{\msgpass}{WAVBFNet}
\usepackage{xurl}
\usepackage{amsmath}
\usepackage{cite}
\usepackage{multirow}
\usepackage{amssymb}
\usepackage{graphicx}
\usepackage{adjustbox}
\usepackage{booktabs}
\usepackage{amsthm}
\newtheorem{theorem}{Theorem}
\usepackage{todonotes}
\usepackage{array}

\begin{document}

\title{\model: Inductive Multi-Hop Logical Query Answering on Knowledge Graphs}

\author{Mayank Kharbanda, Michael Cochez, Rajiv Ratn Shah, Raghava Mutharaju
\thanks{M. Kharbanda (mayankk@iiitd.ac.in) is with IIIT-Delhi, India, and guest at Vrije Universiteit, Amsterdam, The Netherlands. M. Cochez (michael.cochez@abo.fi) is with Ellis Institute Finland and Åbo Akademi University, Finland, prior with Vrije Universiteit, Amsterdam, The Netherlands. R. Shah (rajivratn@iiitd.ac.in) is with IIIT-Delhi, India and R. Mutharaju (raghava@iitpkd.ac.in) is with IIT Palakkad, Kerala, India.}}
\markboth{}
{Mayank Kharbanda, Michael Cochez, Rajiv Ratn Shah, Raghava Mutharaju \MakeLowercase{\textit{et al.}}: \model: Inductive Multi-Hop Logical Query Answering on Knowledge Graphs}
\maketitle

\begin{abstract}
Logical Multi-Hop Query Answering over Knowledge Graphs (KGs) can be formulated as querying, with an implicit completeness assumption. Current works mainly focus on Existential First Order Logic (EFO) queries. These EFO queries contain conjunction($\wedge$), disjunction ($\vee$), and negation ($\neg$) operators. 
Most existing works employ \emph{transductive} reasoning, meaning they are not capable of reasoning over entities unseen during training. In the real world, there is a resource scarcity, and we cannot train a model with all the nodes of a large KG. Hence, we propose \model{}, a wavelet-based \emph{inductive} embedding method for logical query answering on large KGs. Here, the training graph consists of fewer nodes than the test graph. Our model performs on par with the baseline models while having half the number of message-passing layers. It outperforms all of them in most cases, with $75\%$ of the layers. These fewer resource requirements enable us to evaluate \model{} on massive graphs, such as Wiki-KG. We test our model using extensive experiments across varying train-test graph proportions of the FB15k-(237) dataset, comparing it with the state-of-the-art models. The code and datasets for the model are available at \url{https://github.com/kracr/inductwave/}.
\end{abstract}

\begin{IEEEkeywords}
Knowledge Graphs, Logical Query Answering, Multi-Hop Query Answering, Inductive Query Answering, Graph Wavelets.
\end{IEEEkeywords}

%
\IEEEpeerreviewmaketitle

\section{Introduction}
\IEEEPARstart{A}{} Knowledge Graph (KG)~\cite{hogan2021knowledge} is a directed graph used to represent facts. It is a set of triples in the form of source, relation, and object. The KGs are used to extract non-trivial information from data by leveraging structural and logical features. These graphs encompass diverse domains, including healthcare, finance, e-commerce, and search. Tasks such as recommendation systems, link prediction, and knowledge retrieval are performed on KGs to extract novel information~\cite{ji2021survey}. 

Multi-hop logical query answering over KGs involves answering First Order Logic (FOL) queries. It includes traversing more than one hop from a starting node in the KG. Current works mainly focus on Existential First Order Logic (EFO) queries consisting of conjunction ($\wedge$), disjunction ($\vee$), and negation ($\neg$) operators. For query answering, there are two primary ways to train a model. First is the \emph{transductive} method, in which the nodes and relations in the training and test graphs are identical; only the number of triples (edges) in the test graph increases. The other is the \emph{inductive} method, in which the model is trained on a subset of nodes and/or relations and can handle new nodes and/or relations at test time.

\textbf{Current State-Of-The-Art (SOTA).}
Traditional query-answering languages, such as SPARQL, become inadequate while processing queries over incomplete or noisy data. To address this, neural logical query answering methods have been introduced. These models embed queries and the KG in a latent space and predict answers against noise and missing links.

There has been significant progress in recent years in neural methods for multi-hop logical query answering. At the same time, most of these methods are \emph{transductive}. These models require training across all parts of the KG and often fail when encountering new nodes/relations at inference time. As knowledge and datasets expand in the real world, there is a need to process queries on massive KGs containing millions of nodes. However, due to resource constraints, it is often not feasible to train with all the nodes of these enormous KGs.

One way to address this issue is to train a model on a subgraph with fewer nodes and then extrapolate it to process queries on the larger graph. \emph{Inductive} methods, such as GNN-QE~\cite{zhu2022neural} and NodePiece-QE, have been proposed to incorporate this idea, and they have outperformed \emph{transductive} models~\cite{galkin2022inductive,galkin2021nodepiece}.

In an \emph{inductive} setting, GNN-QE generally performs better than NodePieceQE for small or medium-sized graphs. However, GNN-QE employs a memory-intensive link-prediction method, NBF-Net~\cite{zhu2021neural}. NBF-Net makes training on large graphs challenging. In contrast, NodePiece-QE does not face such memory constraints, enabling it to handle larger graphs efficiently.

We introduce \model{}, a graph wavelet-based method for logical query answering. It leverages the structural information of nodes to strengthen link prediction. As a result, the model achieves capabilities similar to GNN-QE with fewer message-passing layers. Ultimately, this allows us to use a message-passing method to query large graphs with millions of nodes (Wiki-KG).

\textbf{Contribution I: A novel method, \msgpass{}, for link prediction.}
We propose a novel message passing algorithm, \msgpass{}. The method combines Graph Wavelet embeddings~\cite{donnat2018learning} with the Neural Bellman-Ford Network (NBF-Net)~\cite{zhu2021neural} for link prediction. The former provides the structural context of a node to the latter message-passing method. \msgpass{} is used for the relation projection operation in the query answering process.

\begin{figure*}[!t]
    \centering
    \includegraphics[width=\textwidth]{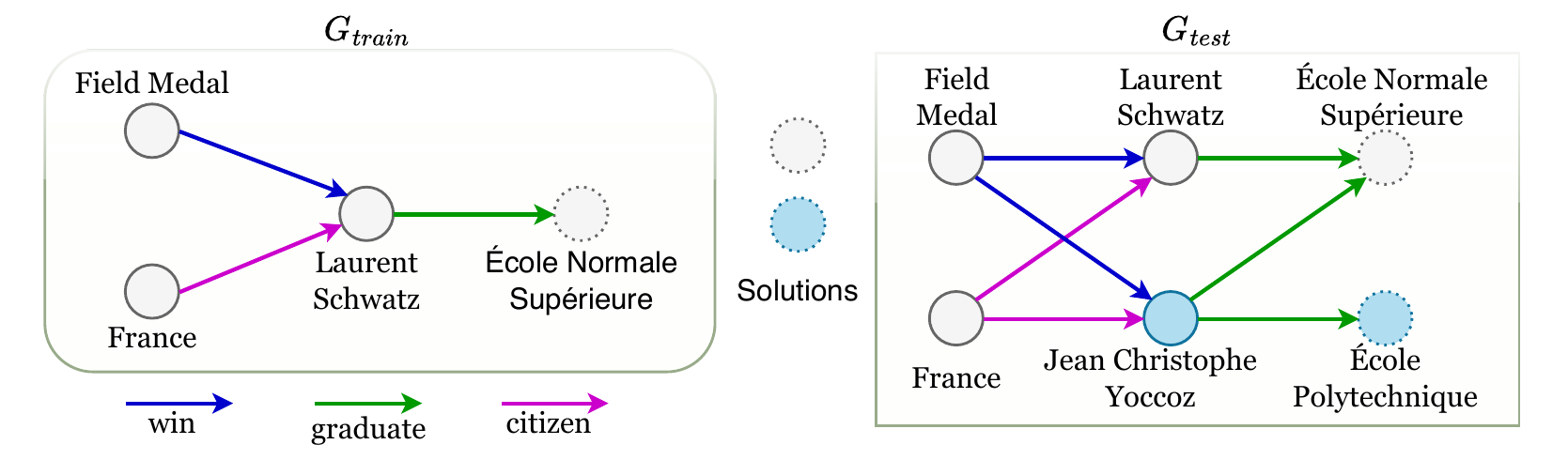}
    \caption{Toy example for \emph{inductive} query answering where the test graph ($G_{test}$) contains additional nodes (represented by blue color) than the train graph ($G_{train}$). Dotted circles represent the solution for the query, \texttt{Q1} - \emph{Name the university from which a French Field medalist graduated}, with FOL as $Q=v.\exists u: win(Field Medal, u) \wedge citizen(France,u) \wedge graduate(u,v)$}
    \label{fig:induct_example}
\end{figure*}
\textbf{Contribution II: Efficient execution of the message passing.}
We extend the GE-SpMM~\cite{huang2020ge} method to make it compatible with graph wavelet embedding. This enables efficient computation of \msgpass{} on GPU hardware. It reduces the memory complexity of \msgpass{} from $O(2b|\mathcal{E}|d)$ to $O(b|\mathcal{V}|d+|\mathcal{E}|d)$. Here, $b$ is the batch size, $|\mathcal{E}|$ and $|\mathcal{V}|$ are the triple and node counts, and $d$ is the embedding dimension.

\textbf{Contribution III: Extensive Evaluation.}
We evaluate our model on a diverse proportion of training and inference nodes set from the FB15k-(237)~\cite{toutanova2015observed} dataset. We also test our model on the Wiki-KG~\cite{hu2020open} dataset, which contains millions of nodes. The results are supported by an ablation study of \model{}, along with its space and runtime analysis.

\section{Related Work}
\textbf{Multi-Hop Query Answering.}
In multi-hop reasoning, one approach is to traverse a path in a KG to perform link prediction~\cite{xiong2017deeppath,lin2023multi,chen2019embedding,guo2018knowledge,guo2016jointly,wang2020entity}. These techniques improve the prediction for rare or complex relations. Another application of multi-hop reasoning is in answering complex logical queries. This involves processing FOL operators to obtain the answers. Our work aligns with this latter approach.

Graph Query Embedding (GQE)~\cite{hamilton2018embedding} and Query2Box~\cite{ren2020query2box} were the first to introduce methods to resolve queries containing disjunction ($\vee$) and conjunction ($\wedge$) operators. Using the beta distributions for embedding, BetaE~\cite{ren2020beta} incorporated the negation operator ($\neg$) in the queries. There are geometric embedding methods, such as ConE~\cite{zhang2021cone} and Query2Geom~\cite{sardina2022analysis}, which embed queries as geometric shapes in the latent space. FuzzyQE~\cite{chen2022fuzzy} provides a fuzzy set of answers between the query traversal. GNN-QE~\cite{zhu2022neural} uses NBF-Net~\cite{zhu2021neural} for the relation projection and fuzzy operators for other FOL operations. CQD~\cite{arakelyan2020complex} employs a greedy approach of beam search for FOL operations, and ComplEx~\cite{trouillon2016complex} for relational projection. RConE~\cite{kharbanda2025rcone} and STARQE~\cite{alivanistos2021query} handle the query answering in multi-modal and hyper-relational graphs, respectively. The detailed studies about the KG reasoning and logical query answering are in~\cite{liang2022survey} and~\cite{ren2023neural}, respectively.

{\setlength{\parindent}{0pt}\textbf{Inductive Logical Query Answering.}
Most logical query-answering methods discussed so far are \emph{transductive}. These require training queries for each entity in the KG. To generalize the training process, \emph{inductive} methods have been proposed.}

GNN-QE~\cite{zhu2022neural}, initially proposed for \emph{transductive} reasoning, can also be used for \emph{inductive} query answering~\cite{galkin2022inductive}. Since, for each relation projection, the node and relation embeddings are initialized based on the query. The model is trained on a sub-graph of the KG. NodePiece-QE~\cite{galkin2022inductive,galkin2021nodepiece} is another inductive method. It represents each node by its incoming (outgoing) relations. It captures high-level structural information through the node’s distance from a few predetermined anchor nodes. The model utilizes CQD~\cite{arakelyan2020complex} for FOL operations. 

ULTRA~\cite{galkin2023towards,galkin2024zero} learns generalized embeddings across multiple KGs. It trains on the relational structures of a few KGs and tests on an entirely new KG by comparing these structures. The model effectively caters to a new set of relations at test time.

{\setlength{\parindent}{0pt}\textbf{Wavelets in Graphs.}
GraphWave~\cite{donnat2018learning} generates diffusion wavelets on an undirected graph for node embeddings. It captures the structural information of a node’s neighborhood via heat-kernel-based information flow. GWNN~\cite{xu2019graph} proposes a Graph Wavelet Neural Network for node classification on an undirected graph. The model learns a diagonal filter to tune information from a neighbor’s wavelet transformation. While previous models focused on low-pass filters, ASWT~\cite{liu2024aswt} uses both band-pass and low-pass filters for graph wavelets. It combines GCN and Graph wavelets for node classification.}

\model{} is an \emph{inductive}, logical query-answering method based on message passing, similar to GNN-QE. However, it requires less memory, making it suitable for large graphs. The method uses graph wavelets in its framework. It trains on a sample of the KG and evaluates on the entire KG; thus, we use GNN-QE and NodePiece-QE as baselines. We exclude ULTRA from our comparison, because the method has fundamentally different goal. It generalizes query answering across multiple KGs, by training on a set of KGs. 

\section{Preliminaries}\label{sec:prelimianries}
\textbf{Inductive Reasoning on KG.} A Knowledge Graph $G(\mathcal{V},\mathcal{E},\mathcal{R})$ is a directed graph with a set of nodes $\mathcal{V}$, a set of relations $\mathcal{R}$, and a triple set $\mathcal{E}$.
\begin{equation}
    \mathcal{E} = \{(e_s, r, e_o) | e_s, e_o \in \mathcal{V}, r \in \mathcal{R}\}
\end{equation}
Given $|\mathcal{V}|=N$ and $|\mathcal{R}|=M$, we say the query answering is \emph{inductive} if the training KG ($G_{train}$) contains $\alpha_1$ nodes and $\beta_1$ relations, such that $|\alpha_1| < N$ and/or $|\beta_1| < M$, i.e., the training nodes/relations are a subset of $G$. The test graph ($G_{test}$) contains $\alpha_2$ nodes and $\beta_2$ relations, such that $\alpha_2 \setminus \alpha_1 \neq \phi$ and/or $\beta_2 \setminus \beta_1 \neq \phi$. For our work, we train on all relations ($|\beta_1|=M,\;\; |\alpha_1|<N$).

Figure \ref{fig:induct_example} provides a toy example. Given a query, \texttt{Q1} - \emph{Name the university from which a French Field medalist graduated.} In the training graph ($G_{train}$), we get \emph{École Normale Supérieure} as the answer. While in the test graph ($G_{test}$), we have two new nodes: \emph{Jean Christophe Yoccoz} and \emph{École Polytechnique}. Hence, we obtain an additional answer for the same query as \emph{École Polytechnique}.

{\setlength{\parindent}{0pt}\textbf{First Order Logic.}
First Order Logic (FOL) query $Q$ utilizes conjunction ($\wedge$), disjunction ($\vee$), existential quantification ($\exists$), and negation ($\neg$) as its logical operators. We exclude universal quantification ($\forall$) because it is rarely employed in real-world KGs, as noted in~\cite{ren2020beta}.}  

The query $Q$ has a distinguished target variable $V_?$ (answer to the query), a finite set of existentially quantified bound variables $V_1, \ldots, V_k$, and a constant entity set $\mathcal{V}_a\subseteq \mathcal{V}$. The queries are structured in Disjunctive Normal Form (DNF) to accommodate the union operator at the conclusion, thereby ensuring scalability for more complex queries.
\begin{equation}
    Q[V_?]=V_?.\exists V_1, \dots, V_k : c_1 \vee c_2 \vee ... \vee c_n
\end{equation}
Each clause $c_i$ is a conjunction of one or more relational binary functions (positive or negative) $r(x,y)$, $r \in \mathcal{R}$, where $x$ and $y$ can be a constant or a variable. For query \texttt{Q1}, the FOL would be $Q=v.\exists u: win(Field Medal, u) \wedge citizen(France,u) \wedge graduate(u,v)$.

{\setlength{\parindent}{0pt}\textbf{Fuzzy Set.} 
A Fuzzy set is a relaxed conventional set, defined as $A = \{(x,\mu)| x \in U\}$. $\mu \in [0,1]$ is the membership function of element $x$ for the set $A$. $U$ is the universal set. Applying FOL operators on a fuzzy set of nodes relaxes the membership of intermediate answers of a sub-query and enhances interpretability~\cite{zhu2022neural}.}

\section{\model{}}
This section describes the three components of \model{} that handle FOL queries, such as \texttt{Q1} in Figure \ref{fig:induct_example}. The first component, \textit{Relation Projection}, processes different relations specified in the query. For example, in \texttt{Q1}, it executes the relations \emph{win}, \emph{graduate}, and \emph{citizen}. The second component, \textit{Fuzzy Operations}, provides details about how \model{} handles other FOL operators, including conjunction ($\wedge$), disjunction ($\vee$), and negation ($\neg$). The third component, \textit{Training Method}, consists of additional information about the model, including traversal dropout and the loss function. All FOL operators are executed on a fuzzy set of entities to improve interpretability inbetween query execution~\cite{zhu2022neural}.


\subsection{Relation Projection}\label{sec:relation_projection}
Given a fuzzy set of head entities $\mathbf{f}_{he}$ and a query relation $q$, the relation projection operation provides a fuzzy set of all the tail entities $\mathbf{f}_{ta}$ that can be reached from $\mathbf{f}_{he}$ using $q$ (Equation \ref{eqn:relproj_gen}). In \texttt{Q1}, for example, for the relation \emph{win}, $\mathbf{f}_{he}$ is a one-hot vector with the probability of Field-Medal being $1$. The relation projection $\mathcal{P}_q (\mathbf{f}_{he})$ for \emph{win} will produce $\mathbf{f}_{ta}$, where the probability of all the individuals who \emph{win} the Fields Medal (in the graph) approaching $1$.
\begin{equation}\label{eqn:relproj_gen}
    \mathcal{P}_q (\mathbf{f}_{he}) : [0,1]^{\mathcal{V}} \mapsto [0,1]^{\mathcal{V}}
\end{equation}
We present \msgpass{} for computing relation projections. The approach consists of two components: graph wavelets and NBF-Net. First, we introduce the \textit{Knowledge Graph Laplacian}, which serves as a foundation for constructing graph wavelets. In the second part, we use the KG Laplacian to develop \textit{Graph Wavelet Embedding}. Finally, in the \textit{\msgpass{}} section, we combine wavelet embeddings with NBF-Net to create a relation projection function.

\subsubsection{Knowledge Graph Laplacian}
To generate the graph wavelet embeddings for \msgpass{}, we first need to calculate the KG Laplacian. A graph Laplacian measures the information flow through edges over time. The Laplacian is defined for a simple undirected graph, as its adjacency matrix is symmetric.

Directed graphs are asymmetric and, therefore, do not have a standard Laplacian. To address this, MagNet~\cite{zhang2021magnet} introduced a magnet Laplacian for directed graphs. It represents the asymmetric adjacency matrix as a hermitian matrix, that preserves the directional information of each edge. Similar to the standard Laplacian, the Laplacian derived from this hermition matrix is positive-semidefinite. Hence, the magnet Laplacian has an orthonormal basis of eigenvectors associated with non-negative eigenvalues.

In a KG, each edge has a direction and a specific relation associated with it. The magnet Laplacian~\cite{zhang2021magnet} is already defined for directed graphs. We extend the method and define a KG Laplacian that accounts for both direction and relation.

Let $\mathbf{A}^r$ be an adjacency matrix containing all the edges for relation $r$ ($r \in\mathcal{R}$) of the KG, $G_{X}$ ($G_{X}=G_{train}/G_{valid}/G_{test}$ dataset). Let $\mathbf{A}_s^r$ be the symmetric matrix generated from $\mathbf{A}^r$, with a degree matrix $\mathbf{D}^r_s(u,u)$, defined as 
\begin{align}
    \mathbf{A}^r_s(u,v) &= \frac{1}{2} (\mathbf{A}^r(u, v) + \mathbf{A}^r(v, u))\\
    \mathbf{D}^r_s(u,u) &= \sum_{v \in \mathcal{V}} \mathbf{A}_s^r(u, v)
\end{align}
$\mathbf{A}^r_s(u,v)>0$ indicates that a relation $r$ exists between node $u$ and $v$. The direction of this edge is captured by
\begin{align}
    \mathbf{\Theta}_{r}^{(g)} (u, v) &= 2\pi g (\mathbf{A}^r(u, v) - \mathbf{A}^r(v, u)) \label{eqn:thetaf}
\end{align}
where $g$ is a hyper-parameter with values in the range $[0, 0.25]$. If there is a direct edge $\mathbf{A}^r(u, v)\in \mathcal{E}$, but not the reverse edge $\mathbf{A}^r(v, u) \notin \mathcal{E}$, then $\mathbf{\Theta}_{r}^{(g)}(u,v)$ will be positive while $\mathbf{\Theta}_{r}^{(g)}(v,u)$ will be negative. Conversely, if the relational edge is bi-directional or does not exist, then $\mathbf{\Theta}_{r}^{(g)} (u, v) =0$. We define the original relational adjacency matrix $\mathbf{A}^r$, using a hermitian matrix as
\begin{equation}
    \mathbf{H}^{r(g)} = \mathbf{A}_s^r \odot \exp{(i_r\mathbf{\Theta}_r^{(g)})}    
\end{equation}
here, $i_r$ is the imaginary dimension for relation $r$, and $\odot$ is element-wise multiplication. $\mathbf{A}_s^r$ captures edge existence, while $\exp{(i_r\mathbf{\Theta}_r^{(g)})}$ accounts for the direction. Inspired by~\cite{zhang2021magnet}, we propose the unnormalized Laplacian for relation $r$ as 
\begin{align}\label{eqn:un_laplacian}
    \mathbf{L}_{un}^{r(g)} = \mathbf{D}_s^r - \mathbf{H}^{r(g)} = \mathbf{D}_s^r - \mathbf{A}_s^r \odot \exp{(i_r\mathbf{\Theta}_r^{(g)})}
\end{align}
We define KG Laplacian as a set of these relational Laplacians, $\mathbf{L}_{un}^{(g)}=\{\mathbf{L}_{un}^{r(g)}|r\in \mathcal{R}\}$.
The proof of $\mathbf{L}_{un}^{r(g)}$ being positive semi-definite can be found in the Supplementary Material. 

In Equation \ref{eqn:un_laplacian}, the $r^{th}$ Laplacian contains information about that specific relation $r$. However, there is no information exchange between two distinct relational Laplacians $\mathbf{L}_{un}^{r(g)}$, $\mathbf{L}_{un}^{s(g)}$ ($r,s \in \mathcal{R}, r \ne s$). This may lead to loss of inter-relational context within a KG. To address this information loss, we normalize the relational Laplacian using the degree of a node in the entire KG, $G_X$, rather than just based on relation $r$. The normalized magnet Laplacian for relation $r$ is defined as
\begin{align}
    \mathbf{L}_{n}^{r(g)} &= \mathbf{D}_z^{-1/2}\mathbf{L}_{un}^{r(g)}\mathbf{D}_z^{-1/2} \nonumber\\ 
    &= \mathbf{D}_z^{-1/2}(\mathbf{D}^r_s - \mathbf{A}^r_s \odot exp(i_r\mathbf{\Theta}_r^{(g)}))\mathbf{D}_z^{-1/2}\label{eqn:n_laplacian}
\end{align}
where $\mathbf{D}_z$ is the sum of all degree matrices $\mathbf{D}^r_s$, $\forall r\in \mathcal{R}$, defined as
\begin{equation}\label{eqn:dz}
    \mathbf{D}_z(u,u) = \sum_{r\in\mathcal{R}}\sum_{v \in \mathcal{V}} \mathbf{A}^r_s(u, v)
\end{equation}
We use this normalized KG Laplacian (Equation \ref{eqn:n_laplacian}) to generate graph wavelet embedding, which is subsequently used in \msgpass{} for relation projection.

\subsubsection{Graph Wavelet Embedding}
In this section, we generate graph wavelet embeddings for \msgpass{} using the KG Laplacian (Equation \ref{eqn:n_laplacian}). GraphWave~\cite{donnat2018learning} proposed a diffusion wavelet embedding method for extracting structural information around a node in an undirected graph. We adapt this concept for a directed graph with $\mathcal{R}$ relations, i.e., a KG.

The KG Laplacian (Equation \ref{eqn:n_laplacian}) is different from the conventional undirected Laplacian, since it involves complex numbers rather than just the real numbers. The spectral decomposition of the normalized KG Laplacian for a relation $r$ is expressed as $\mathbf{L}_{n}^r = \mathbf{U}^r \Lambda^r \mathbf{U}^{r\dagger}$ (omitted $g$ from $\mathbf{L}_{n}^{r(g)}$ for simplicity). Here, $\mathbf{U}^r$ consists of eigenvectors ($\mathbf{U}^{r\dagger}$ is the conjugate transpose of $\mathbf{U}^r$), and $\mathbf{\Lambda}^r = Diag\{\lambda_1,...\lambda_n\}$ is the diagonal matrix containing ordered eigenvalues.

Let $\mathfrak{g}_s$ be a low-pass heat filter, $\mathfrak{g}_s(\lambda) = e^{-\lambda s}$. Here, the scaling factor s is used to adjust the extent to which a distant neighbor of a node affects its embedding. The graph spectral wavelet for a relation $r$ is defined as
\begin{align}\label{eqn:graph_wavelet}
     \mathbf{\Psi}^r &= \mathbf{U}^r \mathfrak{g}_s (\mathbf{\Lambda}^r)\mathbf{U}^{r\dagger}
\end{align}
The low-pass filter ($\mathfrak{g}_s$) incorporates only the lower eigenvalue of a Laplacian, favoring smoothness in the signal. This smoothness leads to neighboring nodes having similar embeddings (graph homophily). However, computing $\mathbf{\Psi}^r$ (Equation \ref{eqn:graph_wavelet}) is an expensive operation. Hence, to simplify this process, we use the Chebyshev polynomial approximation~\cite{shuman2011chebyshev}.

The graph wavelet embedding is obtained by sampling from the characteristic function of $\mathbf{\Psi}^r$. This embedding is used to measure the expected amount of information a node $u$ receives from all other nodes, in relation $r$. The characteristic function for node $u$ is defined as
\begin{align}
   \phi^r_u (t_j,t_k) &= E[e^{i(t_j\mathbf{\Psi}^r_{u}(re)+t_k\mathbf{\Psi}^r_{u}(im))}]\nonumber\\
   &=\frac{1}{\mathcal{|V|}}\sum_{v\in \mathcal{V}}e^{i(t_j\mathbf{\Psi}^r_{vu}(re)+t_k\mathbf{\Psi}^r_{vu}(im))} 
\end{align}
where $\mathbf{\Psi}^r_{u}(re)$ and $\mathbf{\Psi}^r_{u}(im)$ represent the real and imaginary parts of $\mathbf{\Psi}^r_{u}$. $t_j, t_k \in \mathbb{R}$ are the uniform random samples taken from the joint distribution. The graph wavelet embeddings with $d/2$ samples is defined as
\begin{equation}\label{eqn:wave_emb}
    \mathbf{\chi}^r_u=[Re(\phi^r_u (t_j, t_k)), Im(\phi^r_u (t_j,t_k))]_{(t_{1j}, t_{1k}),\ldots,(t_{(d/2)j}, t_{(d/2)k})}
\end{equation}
The resulting graph wavelet embeddings, $\mathbf{\chi}^r_u$, will be $d$-dimensional, with $d/2$ real and $d/2$ imaginary components. We use this wavelet embedding (Equation \ref{eqn:wave_emb}) for the \msgpass{} in the following section.

\subsubsection{\msgpass{}}
The relation projection method, \msgpass{}, is constructed using two components: the graph wavelet embeddings and the Neural Bellman Ford Network (NBF-Net)~\cite{zhu2021neural}.

GNN-QE~\cite{zhu2022neural} uses NBF-Net for the relational projection. NBF-Net initializes the embeddings for the source node and the query relation. It then performs message passing to obtain the embeddings of all the nodes. These embeddings are subsequently used to compute the probability of each node being the tail (for the given source node and relation) in the link prediction task. GNN-QE modifies NBF-Net to operate with a fuzzy set of source nodes instead of a single source node for the projection operation.

Our model, \msgpass{}, integrates graph wavelet embeddings (Equation \ref{eqn:wave_emb}) with NBF-Net, rather than relying solely on the latter. This addition of graph wavelet embedding provides extra structural information. We will see in the results section that combining the two helps in early convergence for our model, reducing the number of message-passing layers required. 

In WAVBFNet, we begin by initializing the source nodes' embeddings (Equation \ref{eqn:mp_init}).
\begin{equation}\label{eqn:mp_init}
    \mathbf{h}^{(0)}_v \leftarrow x_v \mathbf{q}
\end{equation}
here, $x_v$ is the probability of node $v$ in $\mathbf{f}_{he}$ (fuzzy set of head nodes). Since we are working with complex wavelet embedding, we split the vector (Equation \ref{eqn:mp_init}) into real and imaginary parts and handle them separately. $\mathbf{h}^{(0)}_{v_{re}} = \mathbf{h}^{(0)}_{v_{[:i/2]}}$ being the real vector and $\mathbf{h}^{(0)}_{v_{im}} = \mathbf{h}^{(0)}_{v_{[(i/2)+1:]}}$ the imaginary part.

In the message passing method, we define the $MESSAGE$ for a node $v$ as a function of its neighbor's embedding ($\mathbf{h}^{(t-1)}_{x}$), the relation embedding ($w_{q}(x, r, v)\; | \; (x,r,v) \in \mathcal{E}$), and the wavelet embeddings of the neighbor ($\mathbf{\chi}^r_{x}$ from Equation \ref{eqn:wave_emb}). The function for the real part is defined as
\begin{align}
    \label{eqn:mp_message}
    MES&SAGE (\mathbf{h}^{(t-1)}_{x_{re}}, w_{q_{re}}(x, r, v),\mathbf{\chi}^r_{x_{re}}) \nonumber\\
    &= \mathbf{h}^{ (t-1)}_{x_{re}} \odot (\mathbf{W}_r\mathbf{q} + \mathbf{b}_r)_{re} \odot (\mathbf{w_1}+\mathbf{w_2}\mathbf{\chi}^r_{x_{re}})
\end{align}
The term $(\mathbf{W}_r\mathbf{q} + \mathbf{b}_r)_{re}$ indicates that the relation embedding is dependent on the query relation $q$. The product of the first two terms in Equation \ref{eqn:mp_message} is analogous to the link prediction method DistMult~\cite{yang2014embedding}. The expression $(\mathbf{w}_1+\mathbf{w}_2\mathbf{\chi}^r_{x_{re}})$ represents a linear combination of the former two terms, with and without the wavelet embedding ($\mathbf{\chi}^r_{x_{re}}$). A corresponding function can be deduced for the imaginary part as well. Note that we tested other link prediction methods, including ComplEx~\cite{trouillon2016complex} and TransE~\cite{bordes2013translating}, but DistMult performed the best among the three. The message passing for the \msgpass{} (Figure \ref{fig:message_passing}) is described as
\begin{align}
    \mathbf{h}^{(t)}_{v_{re}} \leftarrow &AGG (\{ MESSAGE (\mathbf{h}^{(t-1)}_{x_{re}}, w_{q_{re}}(x, r, v), \mathbf{\chi}^r_{x_{re}}) \mid \nonumber\label{eqn:msgp_re}\\
    &(x, r, v) \in \mathcal{E}(v)\} \cup \{\mathbf{h}^{(0)}_{v_{re}}\})\\
    \mathbf{h}^{(t)}_{v_{im}} \leftarrow &AGG (\{ MESSAGE (\mathbf{h}^{(t-1)}_{x_{im}}, w_{q_{im}}(x, r, v), \mathbf{\chi}^r_{x_{im}}) \mid \nonumber\label{eqn:msgp_im}\\
    &(x, r, v) \in \mathcal{E}(v)\} \cup \{\mathbf{h}^{(0)}_{v_{im}}\})\\
    \mathbf{h}^{(t)}_{v} \leftarrow &CONCAT(\mathbf{h}^{(t)}_{v_{re}}, \mathbf{h}^{(t)}_{v_{im}})
\end{align}
Note, we have concatenated the real and imaginary embeddings ($\mathbf{h}^{(t)}_{v_{re}}$ and $\mathbf{h}^{(t)}_{v_{im}}$) at the end. We use PNA (Principal Neighborhood Aggregator)~\cite{corso2020principal} as the aggregator function ($AGG$), unless specified otherwise.
\begin{figure}
    \centering
    \rotatebox{90}{\includegraphics[width=\columnwidth]{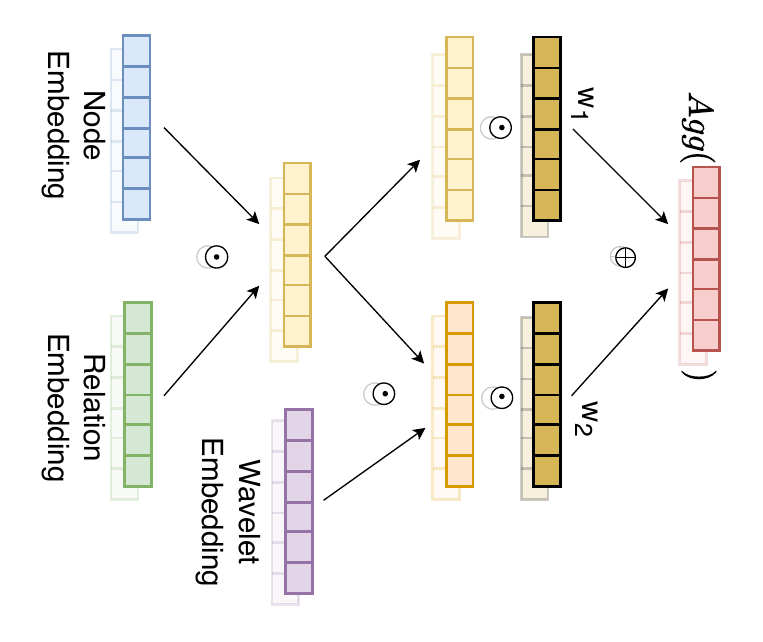}}
    \caption{\msgpass{} message passing. Here $\odot$ is element-wise multiplication, $\oplus$ is element-wise addition, $w_1$ and $w_2$ are parameters. The shadow under each vector and operator represents the real and imaginary parts working in parallel.}
    \label{fig:message_passing}
\end{figure}
After the message passing phase, the embedding is fed into a two-layer feed-forward network to obtain the fuzzy set of tail nodes ($\mathbf{f}_{ta}$). Thus, the relation projection described in the Equation \ref{eqn:relproj_gen} can be expressed as the following function
\begin{equation}
    \mathcal{P}_q(\mathbf{f}_{he}) = \sigma (FF(\mathbf{h}^{(T)})
\end{equation}
where $\sigma$ is the sigmoid activation function that maps all values to the range $[0, 1]$. 

\subsection{Fuzzy Operations}\label{sec:fol_operators}
We employ fuzzy set operations to implement the other FOL operators, following a similar approach to~\cite{zhu2022neural}. Since, these operators should ideally adhere to the laws of logic, such as commutativity, associativity, and closure.

Given the fuzzy sets of nodes from the relation projection as $\mathbf{y_1},\mathbf{y_2} \in [0,1]^{\mathcal{V}}$. The conjunction ($\wedge$), disjunction ($\vee$), and negation ($\neg$) operations are represented by the following formulations, respectively.
\begin{align}
    \mathcal{C}(\mathbf{y_1}, \mathbf{y_2}) &= \mathbf{y_1} \odot \mathbf{y_2}\\
    \mathcal{D}(\mathbf{y_1}, \mathbf{y_2}) &= \mathbf{y_1} + \mathbf{y_2} - \mathbf{y_1} \odot \mathbf{y_2}\\
    \mathcal{N} (\mathbf{y_1}) &= \mathbf{1} - \mathbf{y_1}
\end{align}
where $\odot$ is the element-wise multiplication and $\mathbf{1}$ is a vector of all ones.
\subsection{Training Method}
\textbf{Traversal Dropout.}
Direct query-relation edges originating from the source nodes are randomly removed from the training graph with a probability of $p$~\cite{zhu2022neural}. This prevents the model from overfitting by limiting direct access to the trivial tail entities. Randomly dropping edges from the KG makes the model resilient to the incomplete nature of real-world graphs.

{\setlength{\parindent}{0pt}\textbf{Loss Function.}
The model is trained using the binary cross-entropy loss. Since \model{} generates probability estimates for each node in the graph, negative sampling is not required. Instead, the probabilities for both positive and negative answers are directly included in the loss calculation. The loss is given as}
\begin{align}\label{eqn:loss1}
    L=& -\frac{1}{|\mathcal{A}_Q|}\sum_{u\in \mathcal{A}_Q} log\;p(u|Q)\nonumber\\
    &-\frac{1}{|\mathcal{V}\setminus \mathcal{A}_Q|} \sum_{u'\in \mathcal{V}\setminus \mathcal{A}_Q} log\;(1 - p(u'|Q))
\end{align}
here, $\mathcal{A}_Q$ is the set of all the answers to the complex query $Q$, and $log\;p(u|Q)$ denotes the probability of the answer $u$, if query $Q$ is passed through \model{}.
\section{Experiments}
\subsection{Experimental Setup}
\textbf{Datasets and Evaluation Metric.}
We use FB15k-(237)~\cite{toutanova2015observed} and Wiki-KG~\cite{hu2020open} datasets in our study. The FB15k-(237) dataset is derived from Freebase~\cite{bollacker2008freebase}. It contains 237 relations and improves on the previous version by removing easy inverse relation leakage~\cite{bordes2013translating}. The dataset is used to assess the models with varying train-test graph proportions. The Wiki-KG dataset is part of the Open Graph Benchmark suite, which consists of realistic, large-scale datasets. The dataset features over $2.5M$ nodes in its complete graph. This makes it an ideal choice for testing the models on a large graph. We evaluate all the models using the HITS scores.
\begin{figure*}[t!]
    \centering
    \includegraphics[width=\textwidth]{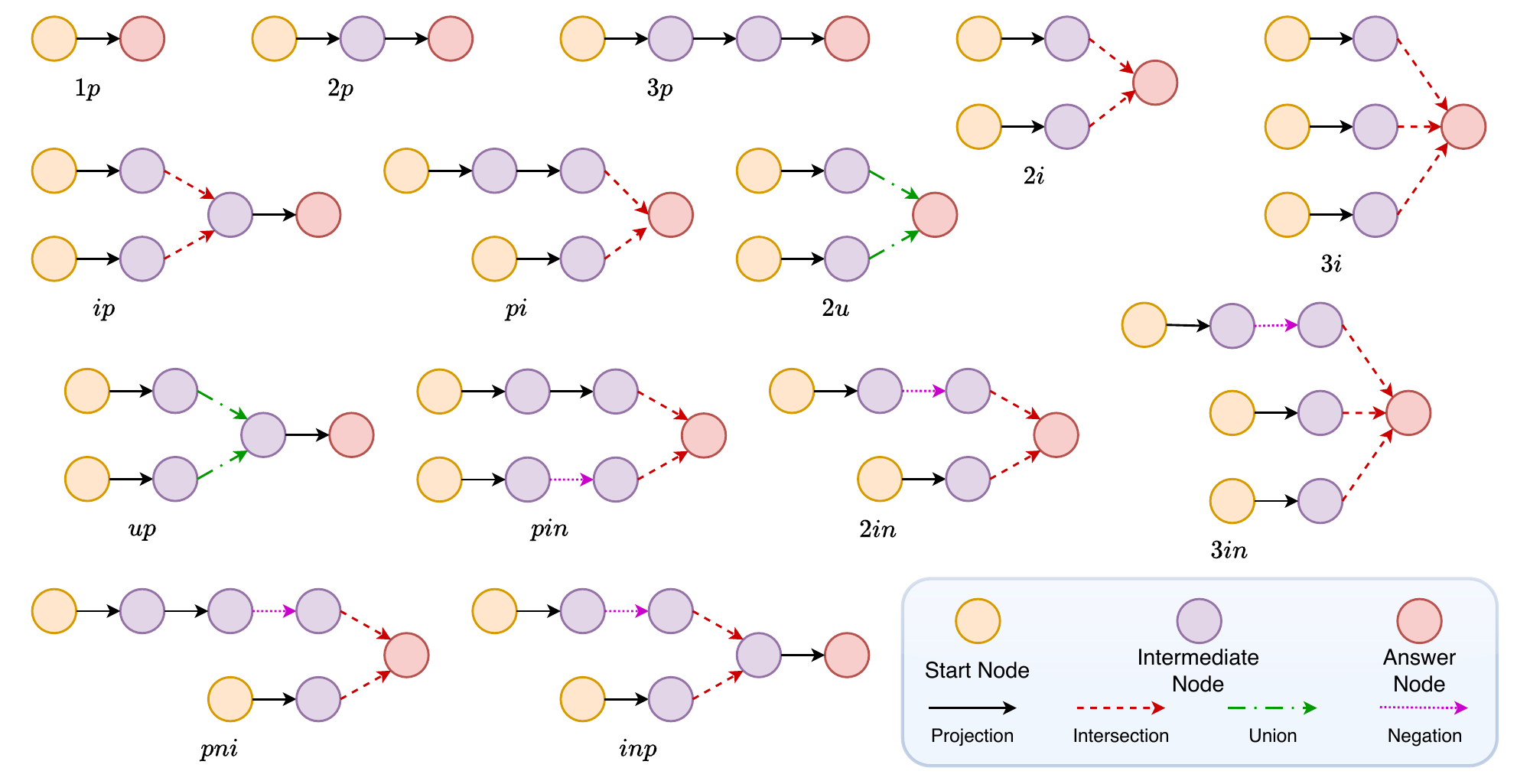}
    \caption{Standard query structures used to generate First Order Logical (FOL) queries. Here, $p$ is relation projection, $i$ is intersection, $n$ is negation, and $u$ is union operator.}
    \label{fig:query_structures}
\end{figure*}

{\setlength{\parindent}{0pt}\textbf{Baselines.}
We use GNN-QE~\cite{galkin2022inductive,zhu2022neural} and NodePiece-QE~\cite{galkin2022inductive,galkin2021nodepiece} as the baselines. GNN-QE uses NBF-Net for relation projection and employs fuzzy set operations for other FOL operators. In NodePiece-QE, a node’s embedding depends on its directly connected relations and its distance from predefined anchor nodes. Its FOL operators are implemented using CQD-Beam~\cite{arakelyan2020complex}. In another variant of NodePiece-QE, the node embeddings are processed through a relational GNN encoder before being passed to CQD-Beam. However, NodePiece-QE cannot handle the negation operator. So we compare it with non-negative queries only.}

We are not including any \emph{transductive} model for comparison, as they perform poorly for \emph{inductive} reasoning, as studied in~\cite{galkin2022inductive}. We use a heuristic method~\cite{galkin2022inductive} as a baseline to evaluate the performance of simpler models in inductive settings.

We exclude ULTRA~\cite{galkin2023towards,galkin2024zero} from our comparison because it addresses a different problem. It is a Foundation Model designed for zero-shot transfer across multiple KGs, while our work optimizes efficiency for a single graph. ULTRA employs dual training, which involves message passing for both relational and entity graphs (effectively training two GNN-QE encoders). This design incurs significant computational overhead, contradicting our primary goal of scalable inference. Additionally, we are already comparing InductWave against GNN-QE, which serves as the backbone of ULTRA’s reasoning module.

{\setlength{\parindent}{0pt}\textbf{Training Graphs.}
In the first dataset, we divide the entities of the FB15k-(237) dataset into three disjoint subsets: $\mathcal{V}_{train}$, $\mathcal{V}_{valid}$, and $\mathcal{V}_{test}$. We create the training graph with $\mathcal{V}_{train} = \tau*\mathcal{V}$ nodes, where $\tau=0.4$. This setup, including the value of $\tau$, is based on~\cite{galkin2022inductive}. The remaining nodes, $(1-\tau)*\mathcal{V}$, are split equally between $\mathcal{V}_{valid}$ and $\mathcal{V}_{test}$. The inference graphs for valid and test cases are constructed from $\mathcal{V}^{inf}=\mathcal{V}_{valid/test} \cup \mathcal{V}_{train}$, with the inference ratio $\mathcal{V}^{inf}/\mathcal{V}_{train}=175\%$. This dataset is intended for testing in the same environment, where GNN-QE was evaluated in~\cite{galkin2022inductive}.}

To further reduce the memory footprint, we create a second set of datasets. In this case, instead of using a train graph consisting all the nodes from $\mathcal{V}_{train}$. We generate a train graph from a subset $\mathcal{V}_{train_0}$ ($\mathcal{V}_{train_0}\subsetneq\mathcal{V}_{train}$) consisting of $l$ nodes, selected randomly. Additionally, we create $k$ random context graphs, each with $l$ nodes, drawn from $\mathcal{V}_{train}$, resulting in entity sets $\mathcal{V}^k_{ctxt} = \{\mathcal{V}_{ctxt_1}, \mathcal{V}_{ctxt_2}, \ldots, \mathcal{V}_{ctxt_{k}}\}$. We generate inference graphs for valid and test cases from  $\mathcal{V}^{inf_0}_{valid/test} = \mathcal{V}_{valid/test} \cup \mathcal{V}_{train_0}$, respectively. We sample multiple sizes for $\mathcal{V}_{train}$ by varying $\tau\in\{0.2, 0.3,\allowbreak \ldots, 0.9\}$ in $\mathcal{V}_{train} = \tau*\mathcal{V}$. This results in a diverse range of graphs with the ratio $\mathcal{V}^{inf}/\mathcal{V}_{train}$ ranging from $106\%$ to $300\%$. Note that we omitted $\tau = 0.1$ because the query generation module~\cite{ren2020beta} was unable to create queries due to the small graph size.

We use Wiki-KG to evaluate the scalability of our model. The KG consists of approximately $2.5$ million nodes and more than $16$ million triples. It has $512$ unique relations. Similar to FB15k-(237), we create a training set that contains $1.5$ million nodes. From this set, we randomly generate a training graph and $k$ context graphs, each of size $l$. Both the valid and test sets include an additional $500k$ nodes, containing $5$ million known and $600k$ missing edges.

{\setlength{\parindent}{0pt}\textbf{Query Generation.}
Queries are generated using the query generation module described in~\cite{ren2020beta}. We utilize 14 different query structures, including \emph{1p}, \emph{2p}, \emph{3p}, \emph{2i}, \emph{3i}, \emph{pi}, \emph{ip}, \emph{2u}, \emph{up}, \emph{pni}, \emph{pin}, \emph{inp}, \emph{2in}, and \emph{3in}, where $p$ is relation projection, $i$ is intersection, $n$ is negation, and $u$ is union (Figure \ref{fig:query_structures}). Detailed statistics on the number of queries generated are available in the Supplementary Material.}

{\setlength{\parindent}{0pt}\textbf{Hyper-Parameters.}
\begin{table*}[t!]
    \centering
    \caption{Hyperparameters of \model{} on different datasets.}
    \begin{adjustbox}{width=\textwidth}
    \begin{tabular}{llccc}
        \toprule
        \multicolumn{2}{l}{\bf{Hyperparameter}} & \multicolumn{1}{>{\centering\arraybackslash}p{3cm}}{\bf{FB15k-(237)} \newline \model{} (3 layers)} & \multicolumn{1}{>{\centering\arraybackslash}p{3cm}}{\bf{FB15k-(237) \newline \model{} (2 layers)}} & \multicolumn{1}{>{\centering\arraybackslash}p{3cm}}{\bf{Wiki-KG \newline \model{} (2 layers)}}\\
        \midrule
        \multirow{4}{*}{\bf{GNN}}
        & \#layers & 3 & 2 & 2\\
        & hidden dim (real) & 16 & 16 & 8 \\
        & hidden dim (imaginary) & 16 & 16 & 8 \\
        & composition & DistMult~\cite{yang2014embedding} & DistMult & DistMult \\
        & aggregation & PNA~\cite{corso2020principal} & PNA & mean \\
        \midrule
        \multirow{2}{*}{\bf{MLP}}
        & \#layer & 2 & 2 & 2 \\
        & hidden dim. & 64 & 64 & 32 \\
        \midrule
        \bf{Traversal Dropout} & probability & 0.45 & 0.35 & 0.35 \\
        \midrule
        \multirow{8}{*}{\bf{Learning}}
        & batch size & 36 & 36 & 6 \\
        & sample weight & uniform across queries & uniform across queries & uniform across queries \\
        & loss & BCE & BCE & BCE \\
        & \# negatives & 32 & 32 & 32 \\
        & optimizer        & Adam & Adam & Adam \\
        & learning rate    & 5e-3 & 5e-3 & 5e-3 \\
        & iterations (\#batch) & 30,000 & 30,000 & 20,000 \\
        & adv. temperature & 0.1 & 0.12 & 0.12 \\
        \midrule
        \multirow{5}{*}{\bf{Graph Wavelet}} 
        & scaling factor & 10 & 10 & 10 \\
        & chebyshev polynomial apprx & 37 & 37 & 5 \\
        & KG laplacian (g) & 0.25 & 0.25 & 0.25 \\
        & Step size for $t_1$ & 4 & 4 & 4 \\
        & Step size for $t_2$ & 3 & 3 & 3 \\
        \bottomrule
    \end{tabular}
    \end{adjustbox}
    \label{tab:hyperparameter}
\end{table*}

In earlier models\cite{ren2020query2box,ren2020beta}, queries were processed in batches, with each batch containing a single type of query. The inability to merge different query types due to varying lengths hindered scalability as the number of query types increased. To address this issue, ~\cite{zhu2022neural} proposed a non-recursive approach to query execution, in which the queries are converted into postfix notation. This allows the use of stacks for query execution. Similarly, we also convert the queries to postfix notation for InductWave.}

For FB15k-(237), we set $l=0.5*|\mathcal{V}_{train}|$ (0.5 is used ensure less memory footprint), and generate two context graphs ($k = 2$), for all ratios except $106\%$. We do the same for Wiki-KG. In the case $\mathcal{V}^{inf}/\mathcal{V}_{train}=106\%$, we take $k=0$. We train \model{} with $\mathcal{V}_{train_0}$ and $k$ context graphs in rotation among epochs. The baseline models were trained on $\mathcal{V}_{train_0}$. The rationale behind generating $k$ context graphs is that, at any given time, only one graph (amongst train and context graphs) will be in the memory. This graph would be smaller than $|\mathcal{V}_{train}|$ (saving memory), while still providing additional context. Concurrently, we experimented with the dataset containing no context graphs (first dataset), where both \model{} and baselines are trained on $\mathcal{V}_{train}$ only.

We train \msgpass{} with two-layer and three-layer configurations. NBF-Net in GNN-QE is trained with four message-passing layers, as in~\cite{galkin2022inductive}. We do not train our models for negative query structures in Wiki-KG because generating negative queries requires adding reverse relations to the KG. This would effectively double the number of triples in the graph, which are already numerous. As in the baseline~\cite{galkin2022inductive}, we are not conducting experiments for GNN-QE on the Wiki-KG dataset due to GPU memory constraints. 

For the Graph Wavelet parameters, an ablation study was deemed unsuitable, so we performed a grid search to determine the optimal values. The parameters explored included the scaling factor $= [0.1, 1, 10, 50]$, $g = [0.1, 0.15, 0.2, 0.25]$, and $t_1, t_2 = [2, 3, 4, 5, 10]$. We used the chebyshev approximation parameter consistent with the original work~\cite{donnat2018learning}. For WikiKG, this parameter of chebyshev approximation was chosen specifically to reduce computational costs due to its large size. For traversal dropout, we tested values in $[0.2, 0.25, 0.3, 0.35, 0.4, 0.45, 0.5]$. All experiments for \model{} were conducted on 40GB Nvidia A100 GPUs. Table \ref{tab:hyperparameter} presents the hyperparameters for \model{}. 

\subsection{Result Analysis}
\begin{table*}[t!]
    \centering
    \caption{HITS@10 (\%) score for \emph{inductive} query answering, where all the nodes in $\mathcal{V}_{train}$ are used in a single training graph, $\mathcal{V}^{inf}/\mathcal{V}_{train} = 175\%$. avg$_p$ is the average of positive queries, and avg is the average of all the queries.}
    \begin{adjustbox}{width=\textwidth}
    \begin{tabular}{lrrrrrrrrrrrrrrrr}
        \toprule
        \bf{Model} & \bf{avg} & \bf{avg$_p$} & \bf{1p} & \bf{2p} & \bf{3p} & \bf{2i} & \bf{3i} & \bf{pi} & \bf{ip} & \bf{2u} & \bf{up} & \bf{2in} & \bf{3in} & \bf{inp} & \bf{pin} & \bf{pni} \\
        \midrule
            Edge-type Heuristic & 8.0 & 10.1 & 17.6 & 8.2 & 9.9 & 10.6 & 13.0 & 9.6 & 8.2 & 5.3 & 8.5 & 2.6 & 3.1 & 8.7 & 3.7 & 2.8 \\
            NodePiece-QE & - & 10.9 & 25.2 & 8.3 & 7.9 & 11.8 & 13.0 & 9.2 & 8.6 & 7.0 & 6.8 & - & - & - & - & - \\ 
            NodePiece-QE w/ GNN & - & 26.5 & 42.2 & 18.1 & 11.2 & 35.8 & 44.2 & 26.3 & 21.6 & 24.7 & 14.6 & - & - & - & - & - \\ 
            GNN-QE & 44.8 & 50.9 & 62.7 & \textbf{37.1} & \textbf{32.2} & 72.7 & 83.5 & \textbf{57.6} & \textbf{43.4} & 38.3 & \textbf{30.3} & 32.7 & 46.0 & \textbf{29.4} & \textbf{27.2} & \textbf{33.4} \\
            \model{} (3 layers) &  \textbf{45.5} & \textbf{51.6} & \textbf{67.2} & 36.4 & 31.3 & \textbf{74.5} & \textbf{84.9} & \textbf{57.6} & 42.3 & \textbf{41.3} & 28.6 & \textbf{34.6} & \textbf{49.0} & \textbf{29.4} & 26.5 & 33.2 \\
            \model{} (2 layers) & 41.7 & 47.9 & 63.1 & 33.5 & 29.2 & 70.1 & 81.3 & 54.1 & 38.8 & 33.3 & 27.3 & 29.2 & 43.5 & 28.3 & 24.7 & 26.6 \\
        \bottomrule
    \end{tabular}
    \end{adjustbox}
    \label{tab:hits175worandtrain}
\end{table*}
\begin{table*}[t!]
    \centering
    \caption{HITS@10 (\%) score for \emph{inductive} query answering for $\mathcal{V}^{inf}/\mathcal{V}_{train} = 175\%$. avg$_p$ is the average of positive queries, and avg is the average of all the queries.}
    \begin{adjustbox}{width=\textwidth}
    \begin{tabular}{lrrrrrrrrrrrrrrrr}
        \toprule
        \bf{Model} & \bf{avg} & \bf{avg$_p$} & \bf{1p} & \bf{2p} & \bf{3p} & \bf{2i} & \bf{3i} & \bf{pi} & \bf{ip} & \bf{2u} & \bf{up} & \bf{2in} & \bf{3in} & \bf{inp} & \bf{pin} & \bf{pni} \\
        \midrule
            Edge-type Heuristic & 8.4 & 10.5 & 18.8 & 9.9 & 10.7 & 10.4 & 13.0 & 9.5 & 8.3 & 5.0 & 8.8 & 2.5 & 2.9 & 10.8 & 4.1 & 2.7 \\
            NodePiece-QE & - & 10.9 & 24.6 & 9.9 & 8.3 & 11.5 & 12.5 & 9.5 & 8.5 & 6.0 & 7.4 & - & - & - & - & - \\ 
            NodePiece-QE w/ GNN & - & 28.4 & 46.3 & 19.6 & 12.6 & 36.3 & 43.1 & 25.5 & 23.7 & 30.2 & 18.3 & - & - & - & - & - \\
            GNN-QE & 41.4 & 48.1 & 61.4 & 37.1 & 29.2 & 70.5 & 81.4 & 55.5 & 39.2 & 33.7 & 24.8 & 28.4 & 40.2 & 28.8 & 23.4 & 25.6 \\
            \model{} (3 layers) & \textbf{43.5} & \textbf{50.5} & \textbf{65.7} & \textbf{39.5} & \textbf{32.2} & \textbf{72.4} & \textbf{82.7} & \textbf{57.0} & \textbf{43.2} & \textbf{34.5} & \textbf{27.8} & \textbf{29.2} & \textbf{41.6} & \textbf{30.9} & \textbf{26.4} & \textbf{26.6} \\
            \model{} (2 layers) & 41.5 & 48.6 & 64.2 & 38.8 & 31.0 & 71.0 & 81.5 & 56.1 & 39.5 & 30.4 & 24.9 & 25.2 & 40.7 & 30.8 & 24.8 & 22.8 \\
        \bottomrule
    \end{tabular}
    \end{adjustbox}
    \label{tab:hits175}
\end{table*}
Initially, we test our model on the first dataset generated from FB15k-(237). This dataset consists of one training graph induced from all the nodes in $\mathcal{V}_{train}$, without any context graphs, with a ratio of $\mathcal{V}^{inf}/\mathcal{V}_{train} = 175\%$. The HITS@10 results for this configuration are in Table~\ref{tab:hits175worandtrain}. The average results indicate that \model{}, with a 3-layer \msgpass{}, outperforms GNN-QE, while the 2-layer model performs slightly worse. We believe that integrating wavelet embeddings enables \msgpass{} to more effectively capture the graph structure, which facilitates convergence with fewer message passing layers. Notably, the 2-layer version performs marginally inferior than GNN-QE, despite having half the layers. This makes it a suitable choice in scenarios with resource constraints. NodePiece-QE with GNN performs better than without GNN. However, it is still worse than GNN-QE and \model{} for positive queries (queries without a negation operator) (avg$_p$). As expected, the Edge-type heuristic model performs the worst among all, as it relies on simple heuristics for query answering without any training.

Since \model{} performs better in the previous scenario. We conduct additional experiments by generating small training graphs of size $l$, along with context graphs (the second set of datasets, generated from FB15k-(237) by varying $\tau$). This is to further reduce the memory overhead for training. Table \ref{tab:hits175} presents the HITS@10 scores on FB15k-(237), where $\mathcal{V}^{inf}/\mathcal{V}_{train} = 175\%$. Here, \model{} (3 layers) consistently outperforms other models across all query types. Meanwhile, \model{} (2 layers) performs on par with GNN-QE. Both versions of NodePiece-QE and the heuristic model exhibit similar trends as in Table~\ref{tab:hits175worandtrain}.

\begin{table*}[t!]
\caption{HITS@10 (\%) score for \emph{inductive} query answering for all $\mathcal{V}^{inf}/\mathcal{V}_{train}$ ratios. avg$_p$ is the average of positive queries, and avg is the average of all the queries.}
\label{tab:hits10_all}
\begin{adjustbox}{width=\textwidth}
\begin{tabular}{@{}clrrrrrrrrrrrrrrrr@{}}
\toprule
\multicolumn{1}{l}{Ratio} & Model & \bf{avg} & \bf{avg$_p$} & 1p & 2p & 3p & 2i & 3i & pi & ip & 2u & up & 2in & 3in & inp & pin & pni \\ \midrule
\multirow{6}{*}{300\%} & Edge-type Heuristic & 9.5 & 11.8 & 19.3 & 11.3 & 11.6 & 12.2 & 15.2 & 11.5 & 9.5 & 6.0 & 9.5 & 3.3 & 3.7 & 11.4 & 4.9 & 3.8\\
 & NodePiece-QE & - & 9.2 & 19.8 & 7.5 & 5.0 & 11.3 & 12.5 & 9.2 & 7.1 & 5.9 & 4.9 & - & - & - & - & - \\ 
 & NodePiece-QE w/ GNN & - & 17.8 & 35.6 & 14.6 & 8.8 & 22.7 & 20.2 & 16.4 & 13.7 & 16.1 & 11.9 & - & - & - & - & -  \\ 
 & GNN-QE & 36.4 & 43.8 & 53.2 & 33.6 & 26.0 & 66.6 & 80.9 & 53.9 & 36.1 & 23.3 & 20.6 & 19.0 & 36.3 & 24.7 & 19.7 & 15.6 \\
 & \model{} (3 layers) & 38.0 & 45.3 & 54.4 & \textbf{36.2} & 27.9 & 68.0 & 81.5 & 54.7 & 37.1 & 25.1 & \textbf{22.4} & \textbf{20.9} & 38.3 & \textbf{26.7} & \textbf{21.3} & \textbf{17.8} \\
  & \model{} (2 layers) & \textbf{38.2} & \textbf{45.6} & \textbf{54.5} & 35.6 & \textbf{28.2} & \textbf{69.1} & \textbf{82.2} & \textbf{55.3} & \textbf{38.5} & \textbf{25.4} & 21.6 & 20.4 & \textbf{39.7} & 26.5 & \textbf{21.3} & 16.5 \\\midrule
\multirow{6}{*}{217\%} & Edge-type Heuristic & 9.1 & 11.5 & 19.2 & 11.0 & 11.5 & 11.7 & 14.6 & 10.9 & 9.2 & 5.7 & 9.5 & 2.9 & 3.3 & 10.9 & 4.4 & 3.1 \\
 & NodePiece-QE & - & 11.1 & 22.6 & 10.3 & 9.1 & 11.6 & 12.8 & 9.9 & 9.0 & 6.4 & 7.9 & - & - & - & - & - \\ 
 & NodePiece-QE w/ GNN & - & 22.6 & 42.1 & 15.4 & 9.2 & 28.9 & 33.8 & 19.9 & 17.9 & 21.3 & 14.4 & - & - & - & - & - \\ 
 & GNN-QE & 37.8 & 44.7 & 53.3 & 35.5 & 28.3 & 67.4 & 81.3 & 54.1 & 35.7 & 24.0 & 22.9 & 22.3 & \textbf{39.0} & 28.1 & 19.8 & 17.5\\
 & \model{} (3 layers) & \textbf{39.2} & \textbf{46.8} & \textbf{60.3} & 37.0 & 29.6 & \textbf{69.4} & \textbf{82.1} & \textbf{55.2} & 37.5 & \textbf{26.9} & \textbf{23.6} & \textbf{22.8} & 38.1 & 28.3 & 20.0 & \textbf{18.4} \\
  & \model{} (2 layers) & \textbf{39.2} & 46.7 & 58.7 & \textbf{38.4} & \textbf{30.2} & 68.6 & 81.4 & 53.5 & \textbf{39.2} & 26.6 & \textbf{23.6} & 21.3 & 38.3 & \textbf{29.2} & \textbf{21.9} & 17.7 \\\midrule
\multirow{6}{*}{175\%} & Edge-type Heuristic & 8.4 & 10.5 & 18.8 & 9.9 & 10.7 & 10.4 & 13.0 & 9.5 & 8.3 & 5.0 & 8.8 & 2.5 & 2.9 & 10.8 & 4.1 & 2.7 \\
 & NodePiece-QE & - & 10.9 & 24.6 & 9.9 & 8.3 & 11.5 & 12.5 & 9.5 & 8.5 & 6.0 & 7.4 & - & - & - & - & -\\ 
 & NodePiece-QE w/ GNN & - & 28.4 & 46.3 & 19.6 & 12.6 & 36.3 & 43.1 & 25.5 & 23.7 & 30.2 & 18.3 & - & - & - & - & -\\ 
 & GNN-QE & 41.4 & 48.1 & 61.4 & 37.1 & 29.2 & 70.5 & 81.4 & 55.5 & 39.2 & 33.7 & 24.8 & 28.4 & 40.2 & 28.8 & 23.4 & 25.6 \\
 & \model{} (3 layers) & \textbf{43.5} & \textbf{50.5} & \textbf{65.7} & \textbf{39.5} & \textbf{32.2} & \textbf{72.4} & \textbf{82.7} & \textbf{57.0} & \textbf{43.2} & \textbf{34.5} & \textbf{27.8} & \textbf{29.2} & \textbf{41.6} & \textbf{30.9} & \textbf{26.4} & \textbf{26.6} \\
  & \model{} (2 layers) & 41.5 & 48.6 & 64.2 & 38.8 & 31.0 & 71.0 & 81.5 & 56.1 & 39.5 & 30.4 & 24.9 & 25.2 & 40.7 & 30.8 & 24.8 & 22.8 \\\midrule
\multirow{6}{*}{150\%} & Edge-type Heuristic & 7.9 & 10.0 & 18.9 & 8.9 & 9.4 & 10.4 & 13.2 & 8.8 & 8.2 & 4.6 & 7.7 & 2.3 & 3.0 & 9.3 & 3.5 & 2.1 \\
 & NodePiece-QE & - & 10.6 & 23.5 & 9.2 & 7.3 & 11.5 & 13.1 & 9.7 & 8.6 & 5.8 & 6.8 & - & - & - & - & -\\ 
 & NodePiece-QE w/ GNN & - & 36.8 & 55.3 & 25.8 & 12.2 & 49.7 & 59.8 & 37.3 & 32.6 & 35.8 & 22.4 & - & - & - & - & -\\ 
 & GNN-QE & 40.5 & 48.6 & \textbf{66.8} & 36.2 & 28.0 & 72.4 & 83.5 & 55.5 & 39.5 & 31.6 & 23.5 & 23.3 & 39.8 & 27.8 & 18.2 & 20.9 \\
 & \model{} (3 layers) & \textbf{42.2} & \textbf{49.7} & 66.2 & \textbf{38.3} & \textbf{30.5} & \textbf{73.8} & \textbf{84.6} & \textbf{56.0} & \textbf{40.1} & \textbf{32.8} & \textbf{25.2} & \textbf{25.2} & \textbf{42.1} & \textbf{29.7} & \textbf{21.1} & \textbf{24.8} \\
  & \model{} (2 layers) & 40.6 & 47.8 & 63.2 & 37.1 & 30.2 & 70.8 & 82.7 & 53.7 & 40.0 & 28.4 & 24.5 & 23.8 & 40.1 & 29.8 & 21.7 & 22.1 \\\midrule
\multirow{6}{*}{133\%} & Edge-type Heuristic & 7.4 & 9.3 & 16.7 & 8.7 & 8.7 & 9.9 & 12.6 & 8.5 & 7.1 & 4.5 & 7.3 & 2.4 & 2.8 & 8.3 & 3.9 & 2.3 \\
 & NodePiece-QE & - & 10.8 & 23.1 & 10.0 & 8.3 & 11.1 & 13.0 & 9.7 & 8.8 & 5.8 & 7.0 & - & - & - & - & -\\ 
 & NodePiece-QE w/ GNN & - & 42.1 & 60.7 & 30.2 & 13.4 & 56.0 & 66.9 & 43.8 & 39.7 & \textbf{42.0} & 26.3 & - & - & - & - & -\\ 
 & GNN-QE & 44.0 & 50.7 & 68.9 & 38.4 & 30.7 & 75.1 & 86.1 & 55.0 & 38.8 & 36.8 & 26.2 & 29.4 & 45.3 & 29.3 & 25.3 & 30.5 \\
 & \model{} (3 layers) & \textbf{45.2} & \textbf{51.6} & \textbf{69.1} & \textbf{39.8} & \textbf{31.8} & \textbf{75.6} & \textbf{86.6} & \textbf{56.6} & \textbf{40.3} & 36.8 & \textbf{27.5} & \textbf{32.0} & \textbf{47.5} & \textbf{30.2} & \textbf{28.6} & \textbf{30.7} \\
  & \model{} (2 layers) & 40.7 & 47.5 & 65.2 & 37.1 & 30.5 & 70.7 & 82.6 & 51.3 & 35.9 & 30.6 & 24.0 & 25.2 & 41.3 & 27.9 & 23.5 & 24.0 \\\midrule
\multirow{6}{*}{121\%} & Edge-type Heuristic & 7.0 & 8.9 & 19.8 & 7.4 & 8.5 & 9.0 & 11.3 & 7.4 & 5.9 & 3.7 & 6.9 & 2.3 & 2.7 & 7.1 & 3.3 & 2.1 \\
 & NodePiece-QE & - & 9.6 & 24.9 & 8.1 & 7.7 & 9.8 & 11.2 & 7.4 & 6.5 & 4.5 & 6.1 & - & - & - & - & -\\ 
 & NodePiece-QE w/ GNN & - & 42.3 & 61.7 & 28.4 & 11.7 & 58.7 & 68.1 & 40.0 & \textbf{38.9} & \textbf{46.4} & \textbf{26.4} & - & - & - & - & -\\ 
 & GNN-QE & 41.9 & \textbf{49.0} & 70.4 & \textbf{34.7} & \textbf{29.0} & 73.2 & 81.7 & 52.4 & 37.2 & 36.8 & 25.7 & 30.1 & 39.1 & 26.0 & \textbf{21.7} & 28.9 \\
 & \model{} (3 layers) & \textbf{42.2} & 48.7 & \textbf{71.5} & 33.8 & 27.1 & \textbf{73.9} & \textbf{82.1} & \textbf{52.6} & 35.5 & 37.8 & 23.4 & \textbf{31.6} & \textbf{41.7} & \textbf{28.0} & 20.7 & \textbf{31.0} \\
  & \model{} (2 layers) & 38.9 & 45.4 & 66.2 & 33.7 & 27.7 & 67.7 & 77.1 & 49.9 & 33.6 & 31.6 & 20.8 & 27.4 & 38.0 & 25.9 & 20.2 & 25.6 \\\midrule
\multirow{6}{*}{113\%} & Edge-type Heuristic & 
6.0 & 7.7 & 18.5 & 6.2 & 6.4 & 7.2 & 9.8 & 6.5 & 5.3 & 3.3 & 6.0 & 1.9 & 1.9 & 5.8 & 3.2 & 1.8 \\
 & NodePiece-QE & - & 8.7 & 23.0 & 7.0 & 5.9 & 8.1 & 10.4 & 7.5 & 6.4 & 4.4 & 5.3 & - & - & - & - & -\\ 
 & NodePiece-QE w/ GNN & - & 45.1 & 61.9 & 28.4 & 12.1 & 60.7 & 68.8 & 44.1 & \textbf{44.9} & \textbf{56.0} & \textbf{28.8} & - & - & - & - & -\\ 
 & GNN-QE & 40.7 & 46.9 & \textbf{70.9} & 31.9 & 26.1 & 66.4 & 76.3 & 50.6 & 36.7 & 40.0 & 23.4 & \textbf{31.0} & 38.3 & 24.7 & 20.4 & \textbf{33.4}\\
 & \model{} (3 layers) & \textbf{44.2} & \textbf{50.7} & 69.0 & \textbf{38.8} & \textbf{30.5} & \textbf{75.8} & \textbf{86.4} & \textbf{55.7} & 39.2 & 35.8 & 25.1 & 30.9 & \textbf{46.1} & \textbf{29.5} & \textbf{24.8} & 31.0 \\
  & \model{} (2 layers) & 38.2 & 44.2 & 66.6 & 31.5 & 26.2 & 61.7 & 72.3 & 47.6 & 34.6 & 34.6 & 22.8 & 27.3 & 36.4 & 24.7 & 21.0 & 27.3 \\\midrule
\multirow{6}{*}{106\%} & Edge-type Heuristic & 5.8 & 7.2 & 16.5 & 6.3 & 4.9 & 8.0 & 10.9 & 5.6 & 4.8 & 3.1 & 5.0 & 1.7 & 1.9 & 5.7 & 3.1 & 3.3  \\
 & NodePiece-QE & - & 8.8 & 24.2 & 7.2 & 4.1 & 9.1 & 13.3 & 5.9 & 6.7 & 3.9 & 4.7 & - & - & - & - & - \\ 
 & NodePiece-QE w/ GNN & - & \textbf{49.8} & 66.8 & 33.9 & 12.6 & \textbf{67.1} & \textbf{73.1} & \textbf{46.1} & \textbf{54.2} & \textbf{62.7} & \textbf{31.8} & - & - & - & - & - \\ 
 & GNN-QE & 40.5 & 46.5 & 69.6 & 35.3 & 27.8 & 66.1 & 71.8 & 45.1 & 34.9 & 40.9 & 27.0 & 32.3 & 34.6 & 23.0 & 22.2 & 36.5 \\
 & \model{} (3 layers) & \textbf{41.7} & 47.4 & \textbf{69.3} & \textbf{35.5} & \textbf{29.4} & \textbf{67.1} & 72.4 & 45.7 & 37.1 & 41.9 & 27.9 & \textbf{34.4} & \textbf{35.4} & \textbf{24.3} & 22.8 & \textbf{40.4} \\
  & \model{} (2 layers) & 37.6 & 43.1 & 63.3 & 33.7 & 27.6 & 60.9 & 69.2 & 41.5 & 32.5 & 32.8 & 26.1 & 25.2 & 33.0 & 21.9 & \textbf{28.0} & 30.1 \\ \bottomrule
\end{tabular}
\end{adjustbox}
\end{table*}

\begin{table*}[!t]
\caption{HITS@3 (\%) score for \emph{inductive} query answering for all $\mathcal{V}^{inf}/\mathcal{V}_{train}$ ratios. avg$_p$ is the average of positive queries, and avg is the average of all the queries.}
\label{tab:hits3_all}
\begin{adjustbox}{width=\textwidth}
\begin{tabular}{@{}clrrrrrrrrrrrrrrrr@{}}
\toprule
\multicolumn{1}{l}{Ratio} & Model & \bf{avg} & \bf{avg$_p$} & 1p & 2p & 3p & 2i & 3i & pi & ip & 2u & up & 2in & 3in & inp & pin & pni \\ \midrule
\multirow{6}{*}{300\%} & Edge-type Heuristic & 4.3 & 5.3 & 9.5 & 5.1 & 5.2 & 5.2 & 6.7 & 4.7 & 4.4 & 2.5 & 4.4 & 1.5 & 1.6 & 4.9 & 2.4 & 1.6 \\
 & NodePiece-QE & - & 4.8 & 12.8 & 3.8 & 2.2 & 5.4 & 5.9 & 4.2 & 3.7 & 3.1 & 2.3 & - & - & - & - & -\\ 
 & NodePiece-QE w/ GNN & - & 9.9 & 21.5 & 7.8 & 4.4 & 12.6 & 11.7 & 8.8 & 7.7 & 8.6 & 6.0 & - & - & - & - & -\\ 
 & GNN-QE & 26.6 & 33.3 & 43.2 & 22.6 & 16.1 & 52.3 & 68.9 & 41.5 & 26.3 & 16.0 & 12.4 & 12.7 & 24.1 & 14.8 & 11.6 & 9.9 \\
 & \model{} (3 layers) & 27.6 & 34.3 & 43.0 & \textbf{24.8} & \textbf{17.9} & 53.6 & 69.5 & 42.2 & 27.1 & \textbf{16.8} & \textbf{13.7} & 13.1 & 24.7 & \textbf{16.3} & \textbf{13.0} & \textbf{11.0} \\
  & \model{} (2 layers) & \textbf{27.8} & \textbf{34.5} & \textbf{44.1} & 24.0 & 17.8 & \textbf{54.0} & \textbf{70.1} & \textbf{42.3} & \textbf{28.6} & \textbf{16.8} & 12.8 & \textbf{13.4} & \textbf{26.2} & 16.1 & 12.5 & 10.6 \\\midrule
\multirow{6}{*}{217\%} & Edge-type Heuristic & 4.2 & 5.2 & 9.6 & 5.0 & 5.2 & 5.1 & 6.2 & 4.5 & 4.4 & 2.4 & 4.6 & 1.4 & 1.5 & 4.9 & 2.2 & 1.3 \\
 & NodePiece-QE & - & 5.4 & 14.3 & 4.8 & 4.3 & 5.0 & 5.5 & 4.2 & 4.3 & 2.9 & 3.7 & - & - & - & - & - \\ 
 & NodePiece-QE w/ GNN & - & 12.6 & 26.1 & 8.2 & 4.7 & 16.0 & 19.5 & 10.2 & 10.0 & 11.7 & 7.6 & - & - & - & - & -\\ 
 & GNN-QE & 27.7 & 34.4 & 44.1 & 24.4 & 18.4 & 54.2 & 70.4 & 41.7 & 26.5 & 15.6 & 14.3 & 13.2 & \textbf{26.0} & 17.3 & 12.1 & 10.3 \\
 & \model{} (3 layers) & \textbf{28.7} & \textbf{35.8} & \textbf{47.9} & 25.6 & 19.6 & \textbf{55.7} & \textbf{71.0} & \textbf{42.3} & 28.0 & \textbf{17.4} & \textbf{14.9} & \textbf{13.4} & 25.1 & 17.3 & 12.3 & \textbf{10.9} \\
  & \model{} (2 layers) & 28.6 & 35.6 & 47.5 & \textbf{26.3} & \textbf{19.7} & 54.9 & 70.2 & 40.9 & \textbf{29.1} & 17.3 & 14.7 & \textbf{13.4} & 25.4 & \textbf{17.5} & \textbf{13.0} & \textbf{10.9} \\\midrule
\multirow{6}{*}{175\%} & Edge-type Heuristic & 3.8 & 4.8 & 9.8 & 4.6 & 4.9 & 4.4 & 5.6 & 3.9 & 3.9 & 2.1 & 4.3 & 1.1 & 1.3 & 4.7 & 2.0 & 1.2 \\
 & NodePiece-QE & - & 5.6 & 16.6 & 4.9 & 3.9 & 5.1 & 5.5 & 4.1 & 4.2 & 2.6 & 3.5 & - & - & - & - & -\\ 
 & NodePiece-QE w/ GNN & - & 16.5 & 30.9 & 10.7 & 6.5 & 21.2 & 25.9 & 14.0 & 13.4 & 16.4 & 9.6 & - & - & - & - & -\\ 
 & GNN-QE & 30.2 & 36.9 & 50.2 & 25.5 & 18.9 & 56.5 & 70.2 & 42.8 & 29.5 & \textbf{23.0} & 15.4 & \textbf{17.8} & 27.0 & 17.8 & 14.0 & 14.8 \\
 & \model{} (3 layers) & \textbf{31.6} & \textbf{38.6} & \textbf{53.1} & \textbf{27.1} & \textbf{21.3} & \textbf{58.5} & \textbf{71.7} & \textbf{43.9} & \textbf{31.6} & 22.2 & \textbf{17.7} & 17.2 & \textbf{28.1} & \textbf{19.0} & \textbf{16.2} & \textbf{15.1} \\
  & \model{} (2 layers) & 30.4 & 37.2 & 51.9 & 26.7 & 20.5 & 56.7 & 70.1 & 43.0 & 30.0 & 20.1 & 16.2 & 15.7 & 27.6 & 18.8 & 15.3 & 13.2 \\\midrule
\multirow{6}{*}{150\%} & Edge-type Heuristic & 3.6 & 4.6 & 10.2 & 4.1 & 4.2 & 4.3 & 5.5 & 3.8 & 4.2 & 1.9 & 3.6 & 1.0 & 1.3 & 3.7 & 1.7 & 0.9 \\
 & NodePiece-QE & - & 5.4 & 14.2 & 4.7 & 3.4 & 5.1 & 5.6 & 4.4 & 4.8 & 2.8 & 3.3 & - & - & - & - & -\\ 
 & NodePiece-QE w/ GNN & - & 24.1 & 38.3 & 16.2 & 6.5 & 32.2 & 40.9 & 23.3 & 21.9 & \textbf{22.8} & 14.3 & - & - & - & - & -\\ 
 & GNN-QE & 29.6 & 37.1 & \textbf{54.6} & 24.3 & 18.0 & 58.0 & 72.4 & 42.3 & 28.9 & 20.3 & 15.0 & 13.6 & 25.4 & 16.7 & 11.1 & 12.9 \\
 & \model{} (3 layers) & \textbf{30.6} & \textbf{38.0} & 53.8 & \textbf{25.7} & \textbf{19.8} & \textbf{59.2} & \textbf{73.4} & \textbf{42.7} & \textbf{29.9} & 21.7 & \textbf{15.8} & \textbf{15.4} & \textbf{27.6} & 17.8 & 12.5 & \textbf{13.5} \\
  & \model{} (2 layers) & 29.6 & 36.7 & 52.0 & 25.1 & 19.6 & 56.3 & 71.5 & 40.9 & 30.0 & 19.0 & 15.5 & 15.2 & 26.1 & \textbf{18.0} & \textbf{13.0} & 12.4 \\\midrule
\multirow{6}{*}{133\%} & Edge-type Heuristic & 3.3 & 4.3 & 8.7 & 4.1 & 3.8 & 4.1 & 5.3 & 3.5 & 3.5 & 1.8 & 3.4 & 1.0 & 1.2 & 3.3 & 1.9 & 1.0 \\
 & NodePiece-QE & - & 5.1 & 12.3 & 5.0 & 4.0 & 4.9 & 5.6 & 4.1 & 4.6 & 2.7 & 3.2 & - & - & - & - & -\\ 
 & NodePiece-QE w/ GNN & - & 29.2 & 43.5 & 20.1 & 7.7 & 38.0 & 48.3 & 29.5 & 28.8 & \textbf{28.8} & \textbf{18.2} & - & - & - & - & - \\ 
 & GNN-QE & 32.3 & 38.8 & 55.3 & 26.1 & 20.2 & 60.3 & 74.6 & 41.6 & 28.8 & 25.9 & 16.8 & 18.9 & 31.1 & 17.7 & 15.9 & \textbf{19.4} \\
 & \model{} (3 layers) & \textbf{33.1} & \textbf{39.4} & \textbf{56.7} & \textbf{26.4} & \textbf{20.9} & \textbf{61.0} & \textbf{75.2} & \textbf{42.6} & \textbf{29.3} & 25.3 & 17.4 & \textbf{21.4} & \textbf{33.2} & \textbf{18.8} & \textbf{16.2} & 18.8 \\
  & \model{} (2 layers) & 29.3 & 35.9 & 51.5 & 25.3 & 19.8 & 55.5 & 70.6 & 38.2 & 26.3 & 20.2 & 15.6 & 15.3 & 28.2 & 16.6 & 13.9 & 13.0 \\\midrule
\multirow{6}{*}{121\%} & Edge-type Heuristic & 3.2 & 4.2 & 11.3 & 3.4 & 4.1 & 3.8 & 4.9 & 3.0 & 2.8 & 1.5 & 3.3 & 1.0 & 1.1 & 2.8 & 1.5 & 0.7 \\
 & NodePiece-QE & - & 4.3 & 11.6 & 3.7 & 3.4 & 4.3 & 4.8 & 3.1 & 3.1 & 2.1 & 2.7 & - & - & - & - & -\\ 
 & NodePiece-QE w/ GNN & - & 30.5 & 46.7 & 19.2 & 7.0 & 41.7 & 50.7 & 26.1 & 29.0 & 34.6 & 19.2 & - & - & - & - & - \\ 
 & GNN-QE & \textbf{31.0} & \textbf{37.6} & 58.4 & \textbf{23.1} & \textbf{18.8} & 58.1 & 69.6 & \textbf{39.8} & \textbf{27.6} & \textbf{26.5} & \textbf{16.3} & \textbf{20.7} & 26.7 & 16.1 & \textbf{13.8} & 18.6 \\
 & \model{} (3 layers) & 30.6 & 36.8 & \textbf{58.9} & 21.8 & 17.2 & \textbf{58.7} & \textbf{70.1} & 39.3 & 25.1 & 26.2 & 14.3 & \textbf{20.7} & \textbf{28.2} & \textbf{17.0} & 12.4 & \textbf{19.1} \\
  & \model{} (2 layers) & 28.1 & 34.0 & 54.4 & \textbf{21.8} & 17.6 & 52.9 & 65.1 & 37.4 & 23.5 & 21.1 & 12.6 & 17.5 & 25.8 & 15.7 & 11.7 & 15.6 \\\midrule
\multirow{6}{*}{113\%} & Edge-type Heuristic & 2.7 & 3.6 & 10.4 & 2.9 & 2.9 & 2.8 & 3.8 & 2.7 & 2.4 & 1.3 & 2.9 & 0.7 & 0.8 & 2.3 & 1.4 & 0.7 \\
 & NodePiece-QE & - & 3.9 & 10.7 & 3.4 & 2.8 & 3.2 & 4.0 & 3.0 & 3.2 & 2.1 & 2.5 & - & - & - & - & -\\ 
 & NodePiece-QE w/ GNN & - & 32.9 & 46.8 & 20.2 & 7.8 & 44.7 & 52.0 & 31.2 & \textbf{33.1} & 41.0 & 19.9 & - & - & - & - & -\\ 
 & GNN-QE & 29.6 & 35.6 & 58.3 & 21.4 & 16.6 & 51.8 & 63.5 & 37.3 & 26.8 & \textbf{29.4} & 14.9 & \textbf{20.7} & 24.9 & 14.8 & 13.0 & \textbf{21.6} \\
 & \model{} (3 layers) & \textbf{32.1} & \textbf{38.6} & \textbf{55.2} & \textbf{25.8} & \textbf{19.6} & \textbf{60.8} & \textbf{75.2} & \textbf{41.9} & 28.8 & 24.4 & \textbf{15.6} & 19.6 & \textbf{32.3} & \textbf{18.1} & \textbf{14.7} & 17.6 \\
  & \model{} (2 layers) & 27.6 & 33.1 & 54.6 & 20.6 & 16.4 & 47.0 & 59.3 & 35.1 & 24.9 & 25.0 & 14.7 & 18.7 & 23.8 & 14.6 & 13.5 & 18.4 \\\midrule
\multirow{6}{*}{106\%} & Edge-type Heuristic & 2.6 & 3.3 & 10.0 & 2.9 & 2.0 & 3.1 & 4.2 & 2.1 & 2.0 & 1.3 & 2.2 & 0.5 & 0.8 & 1.9 & 1.5 & 1.2 \\
 & NodePiece-QE & - & 4.7 & 17.3 & 3.2 & 1.7 & 4.5 & 6.6 & 2.2 & 3.2 & 1.4 & 1.9 & - & - & - & - & -\\ 
 & NodePiece-QE w/ GNN & - & \textbf{38.1} & 52.7 & \textbf{24.7} & 8.5 & \textbf{52.2} & 58.1 & 33.0 & \textbf{41.0} & \textbf{48.8} & \textbf{23.7} & - & - & - & - & -\\ 
 & GNN-QE & 29.6 & 35.2 & \textbf{57.3} & 23.8 & \textbf{17.7} & 51.3 & 59.8 & 34.0 & 24.7 & 31.2 & 17.2 & 23.6 & 21.6 & 14.3 & 12.1 & 25.7 \\
 & \model{} (3 layers) & \textbf{30.3} & 35.8 & 56.8 & 23.7 & 17.5 & \textbf{52.9} & \textbf{60.6} & \textbf{34.6} & 27.2 & 32.7 & 16.6 & \textbf{23.7} & \textbf{21.7} & \textbf{14.5} & \textbf{13.8} & \textbf{27.5} \\
  & \model{} (2 layers) & 26.7 & 31.8 & 50.8 & 21.3 & 15.7 & 46.9 & 57.1 & 30.8 & 23.2 & 25.4 & 15.0 & 19.0 & 20.8 & 13.1 & 13.5 & 21.2 \\ \bottomrule
\end{tabular}
\end{adjustbox}
\end{table*}
Table \ref{tab:hits10_all} presents HITS@10 results for all the $\mathcal{V}^{inf}/\mathcal{V}_{train}$ ratios. Starting from the middle of the table, for the ratio $150\%$, \model{} (3-layer) performs better than other models in all query types except \emph{1p}. \model{} (2-layer) performs on par with the GNN-QE model. As we examine higher ratios from here, such as $175\%$, $217\%$, and so on. We observe that our models, on average, outperform all the baselines. At these higher ratios, the performance of both variations (3-layer and 2-layer) converge, with the 2-layer version outperforming the 3-layer in the $300\%$ ratio.

Conversely, as we move to lower ratios, $133\%$, $121\%$, and so on, the 2-layered version performs worse than GNN-QE. The average results of the baselines are converging to \model{} (3-layer), with GNN-QE surpassing it for the positive queries (avg$_p$) in $121\%$. NodePiece-QE with GNN outperforms all for the positive queries in $106\%$. 

Table \ref{tab:hits3_all} consists HITS@3 score for all the $\mathcal{V}^{inf}/\mathcal{V}_{train}$ ratios. The results indicate similar trends as HITS@10 scores in Table \ref{tab:hits10_all}.

All these results, in Table \ref{tab:hits10_all} and \ref{tab:hits3_all}, indicate that \model{} performs better as the ratio $\mathcal{V}^{inf}/\mathcal{V}_{train}$ increases ($|\alpha_1| \ll |\alpha_2|$, Section \ref{sec:prelimianries}). This improvement may be attributed to the additional structural information \model{} receives through graph wavelets, particularly when there is limited information due to a small training graph. In extreme cases, when the training graph is significantly smaller than the complete graph ($300\%$ ratio), there is very little information available for training. As a result, our model converges faster, and adding a third layer leads to overfitting on the small training graph, making it inferior to the 2-layered model.

\begin{table*}[t!]
    \centering
    \caption{HITS@100 (\%) score for \emph{inductive} query answering for WikiKG dataset. avg$_p$ is the average of the positive queries.}
    \begin{tabular}{lrrrrrrrrrrr}
        \toprule
        \bf{Model} & \bf{avg$_p$} & \bf{1p} & \bf{2p} & \bf{3p} & \bf{2i} & \bf{3i} & \bf{pi} & \bf{ip} & \bf{2u} & \bf{up}\\
        \midrule
        Edge-type Heuristic & 4.3 & 15.1 & 1.5 & 1.7 & 4.9 & 9.7 & 2.4 & 1.5 & 1.3 & 0.7 \\
        NodePiece-QE & 5.2 & 22.3 & 1.7 & 1.6 & 4.7 & 8.0 & 2.4 & 1.6 & 3.4 & 1.2 \\
        NodePiece-QE w/ GNN & 11.4 & 63.5 & 1.8 & 1.2 & 8.7 & 13.9 & 3.9 & 1.5 & 7.4 & 0.9 \\
        \model{} (2 layers) & \textbf{26.8} & \textbf{69.3} & \textbf{8.0} & \textbf{9.4} & \textbf{94.4} & \textbf{97.2} & \textbf{47.2} & \textbf{27.4} & \textbf{17.5} & \textbf{2.2} \\
        \bottomrule
    \end{tabular}
    \label{tab:wikikg}
\end{table*}

\begin{table*}[t!]
    \centering
    \caption{HITS@10 (\%) score for ablation study on \emph{inductive} query answering for $\mathcal{V}^{inf}/\mathcal{V}_{train} = 175\%$. avg$_p$ is the average of positive queries, and avg is the average of all the queries. Here (w/o lc) is without linear combination, and (w cplx) is with ComplEx link prediction algorithm.}
    \begin{adjustbox}{width=\textwidth}
    \begin{tabular}{lrrrrrrrrrrrrrrrr}
        \toprule
        \bf{Model} & \bf{avg} & \bf{avg$_p$} & \bf{1p} & \bf{2p} & \bf{3p} & \bf{2i} & \bf{3i} & \bf{pi} & \bf{ip} & \bf{2u} & \bf{up} & \bf{2in} & \bf{3in} & \bf{inp} & \bf{pin} & \bf{pni} \\
        \midrule
            \model{} (3 layers) & \textbf{43.5} & \textbf{50.5} & \textbf{65.7} & \textbf{39.5} & \textbf{32.2} & 72.4 & 82.7 & \textbf{57.0} & \textbf{43.2} & 34.5 & \textbf{27.8} & \textbf{29.2} & 41.6 & \textbf{30.9} & \textbf{26.4} & \textbf{26.6} \\
            \model{} (3 layers) (w/o lc) & 42.7 & 49.8 & 64.1 & 38.4 & 30.8 & \textbf{73.0} & \textbf{83.0} & 56.8 & 41.5 & 34.4 & 25.9 & 28.3 & \textbf{41.8} & 30.2 & 24.8 & 24.7\\
             \model{} (3 layers) (w cplx) & 42.3 & 49.5 & 64.1 & 38.7 & 31.5 & 71.9 & 82.3 & 55.8 & 39.9 & \textbf{35.0} & 26.2 & 27.4 & 40.3 & 30.2 & 24.2 & 25.0\\
        \bottomrule
    \end{tabular}
    \end{adjustbox}
    \label{tab:abl}
\end{table*}
At low ratios, most of the necessary information from the complete graph is already present in the training graph. In such cases, additional message-passing layers tend to be more beneficial than merging wavelet embeddings. This is evident, as the 4-layered GNN-QE either shows similar results or outperforms \model{} for lower ratios. However, when the ratio is high, \model{} performs better because any additional information becomes crucial; in this scenario, we obtain that information through graph wavelets. 

Therefore, \model{} is a suitable choice while working with massive graphs, especially when the training graph is much smaller compared to the complete graph. Although \model{} (2 layers) performs inferior in most cases, it remains a good option when the ratio is high or when there are resource constraints since it requires half as many layers as GNN-QE.

Table \ref{tab:wikikg} presents the results of models on the large Wiki-KG dataset. We chose the two-layered \model{} due to its lower memory requirements. We were able to train the model on a GPU, which was previously not possible for GNN-QE due to its high memory footprint~\cite{galkin2022inductive}. \model{} outperforms all baseline models across all query types.  

\subsection{Complexity Analysis}
{\setlength{\parindent}{0pt}\textbf{GE-SpMM.} GE-SpMM~\cite{huang2020ge} proposed an efficient approach for performing message passing through generalized sparse matrix multiplication. It performs in-place operations on GPU threads, resulting in lower memory requirements than traditional methods. NBF-Net used this approach, reducing its space complexity from $O(b|\mathcal{E}|d)$ to $O(b|\mathcal{V}|d)$. Here, $b$ is the batch size, $|\mathcal{E}|$ and $|\mathcal{V}|$ are the number of triples and nodes respectively, and $d$ is the embedding dimension.}

We extend the GE-SpMM method to include in-place operations for graph wavelet embeddings, ensuring the method's compatibility with \msgpass{}. We used this method for Equations \ref{eqn:msgp_re} and \ref{eqn:msgp_im}, improving the combined space complexity from $O(2b|\mathcal{E}|d)$ to $O(b|\mathcal{V}|d+|\mathcal{E}|d)$. In this enhanced version, $|\mathcal{E}|d$ space is shared across the batch, since the wavelet embeddings remain the same. 

NBF-Net contains four message passing layers. Though, \msgpass{} requires an additional $O(|\mathcal{E}|d)$ space per batch at each layer. It saves space of 2 layers in \model{} (2 layers) and 1 layer in \model{} (3 layers).

{\setlength{\parindent}{0pt}\textbf{Laplacian and Wavelet Embeddings.} \model{} needs to compute an additional \emph{KG Laplacian} and \emph{Graph Wavelet Embedding} compared to GNN-QE. We can intelligently distribute the computation for both across $\mathcal{R}$ relations. This decreases the peak memory requirements at any given time. Furthermore, the \emph{Graph Wavelet Embedding} can also be distributed over $\mathcal{V}$, which further minimizes the peak memory requirements. Additionally, we only need to compute $\mathbf{\chi}^r_u$ for the triples present in the KG (Equation \ref{eqn:mp_message}). Lastly, we can always tune the Chebyshev approximation hyperparameter according to the available resources. Thus, these computations can be efficiently managed according to the resources at hand.}

{\setlength{\parindent}{0pt}\textbf{Message Aggregation.}
Both \msgpass{} and NBF-Net use PNA for message aggregation. PNA (Principal Neighborhood Aggregator) aggregates along four types: \emph{mean}, \emph{max}, \emph{min}, and \emph{std}; and it scales them along three metrics: \emph{node degree}, \emph{1}, and \emph{1/node degree}. For NBF-Net, this results in an intermediate embedding dimension of $4 \times 3 \times d$ for each node. Subsequently, this embedding is reduced to $d$ dimensions using a feed-forward layer through matrix multiplication. This includes matrices of dimensions $N\times 12d$ and $12d\times d$.}

\msgpass{} has separate real and imaginary messages. This leads to matrix multiplications with dimensions $N\times 12(d/2)$ and $12(d/2)\times (d/2)$ for each of the real and imaginary parts. By dividing the message into two smaller matrix multiplications instead of one larger one, \msgpass{} saves both space and time.

\subsection{Ablation Study}
In this study, we compare \model{} against two configurations as part of an ablation analysis. 

In the first test, instead of using a linear combination and introducing additional parameters, we directly multiply the wavelet embedding with the node and relation embeddings. So, Equation \ref{eqn:mp_message} is modified to the following relation.
\begin{align}\label{eqn:mp_message_abl}
    MES&SAGE (h^{(t-1)}_{x_{re}}, w_{q_{re}}(x, r, v),\chi^r_{x_{re}}) \nonumber\\
    &= h^{ (t-1)}_{x_{re}} \odot (W_rq + b_r)_{re} \odot \chi^r_{x_{re}}
\end{align}

In the second configuration, we replaced the DistMult link prediction method in (Equation \ref{eqn:mp_message}) with the ComplEx~\cite{trouillon2016complex} link prediction method. Since graph wavelet embeddings are complex, this was intended to test our model in a complex-plane environment. 

Table \ref{tab:abl} presents results for \model{} (3 layers) \textit{without linear combination} (w/o lc) and \textit{with ComplEx} (w cplx). The results consists of HITS@10 scores for $\mathcal{V}^{inf}/\mathcal{V}_{train} = 175\%$, Fb15k-(237) dataset. We can see from the table that the original \model{} performs better, on average and across most query structures, from both the configurations.

\subsection{Space and Run-Time Usage}
All the following details are for $175\%$ ratio of FB15k-(237), where all the nodes in $\mathcal{V}_{train}$ are used in a single training graph (the first dataset). 

For the GNN-QE, the runtime for one training epoch was 7 minutes and 39 seconds, and GPU memory usage was 20.16 GB. In comparison, the \model{} (3 layers) model took 7 minutes and 9 seconds and used 7.57 GB of memory. Further, for the \model{} (2 layers) model, the runtime was 5 minutes 31 seconds and required 5.63GB of GPU memory.

Additionally, the one-time preprocessing time to generate the graph wavelet embedding (including the KG Laplacian) was 10 minutes and 50 seconds. This preprocessing time was common for both \model{} (2 layers) and \model{} (3 layers), as the hyperparameters for graph wavelet embeddings were identical for both versions.

From this information, we can conclude that our model is more memory and time efficient than GNN-QE, aside from the one-time preprocessing step for graph wavelet embeddings.

\section{Conclusion}
In this work, we present a scalable method for \emph{inductive} logical query answering over KGs. The model is particularly well-suited for scenarios where the training graph is much smaller than the complete graph. It can effectively handle large graphs containing millions of nodes (Wiki-KG). Although the average query score for \model{} is the highest for WIKI-KG dataset, there is still scope for improvement, which could be considered for future work. A future direction could be to explore distributed training for large KGs. Another direction is to develop models that can handle other operators used in query answering.

\section*{Acknowledgments}
ChatGPT and Gemini were used for basic text and code editing.

\bibliographystyle{IEEEtran}
\bibliography{sample-base}
\begin{IEEEbiography}[{\includegraphics[width=1in,height=1.25in,clip,keepaspectratio]{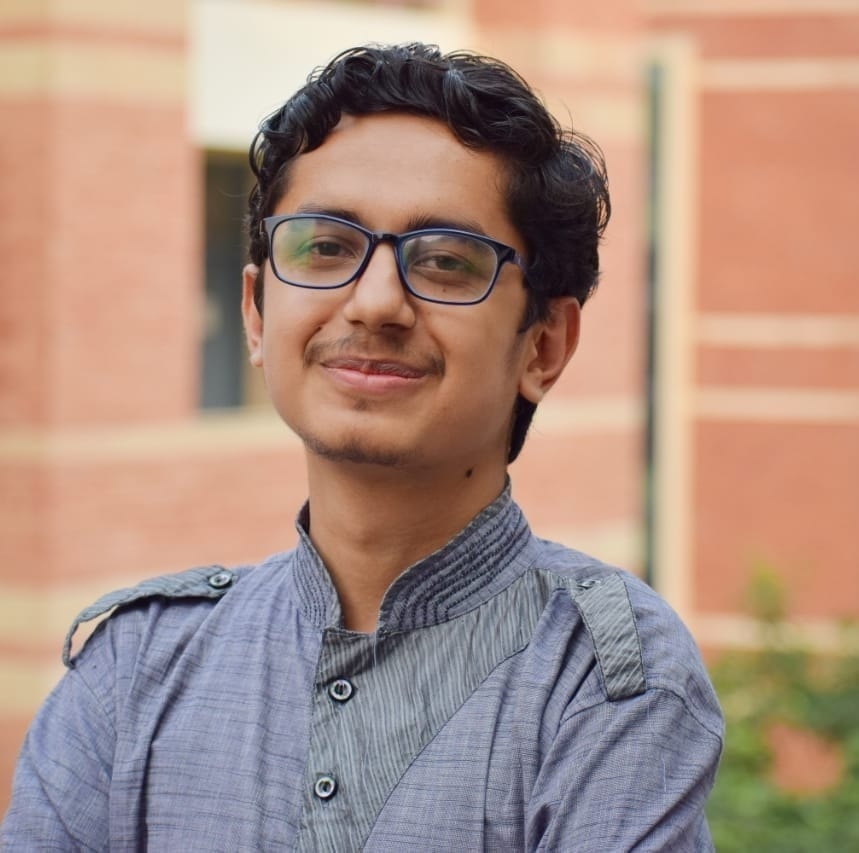}}]{Mayank Kharbanda} received his bachelor’s and master’s degrees in computer science from the University of Delhi, India, in 2018 and 2020, respectively. He is currently working towards a PhD in the Department of Computer Science at IIIT Delhi. He is also a guest at Vrije Universiteit Amsterdam, the Netherlands. His research interests include Knowledge Graphs, Graph Representation Learning, and discrete mathematics. He is a UGC-JRF/SRF fellow. More information at \url{https://mayankkharbanda.github.io/}.
\end{IEEEbiography}

\begin{IEEEbiography}[{\includegraphics[width=1in,height=1.25in,clip,keepaspectratio]{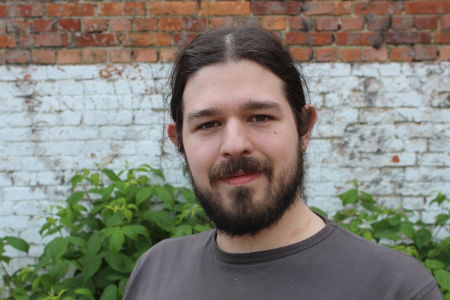}}]{Michael Cochez} 
is Principal Investigator (PS Fellow) at the ELLIS Institute Finland and Professor at Åbo Akademi University, Turku, Finland. He is also part-time at Vrije Universiteit Amsterdam, the Netherlands.
He was a postdoc at Fraunhofer FIT, Germany and obtained his PhD from the University of jyväskylä, Finland.
More information at \url{http://www.cochez.nl}.
\end{IEEEbiography}

\begin{IEEEbiography}[{\includegraphics[width=1in,height=1.25in,clip,keepaspectratio]{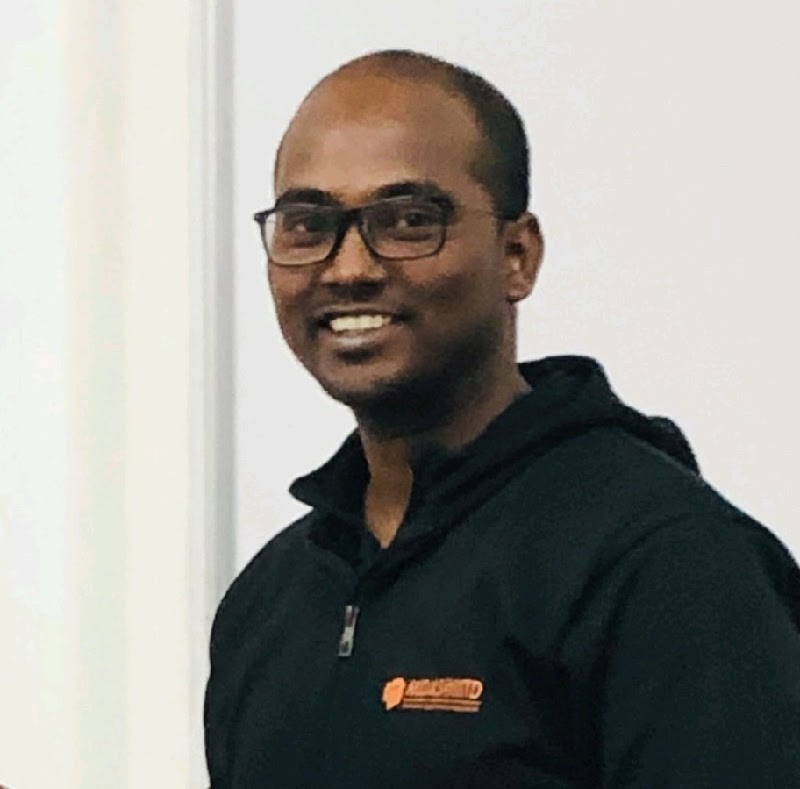}}]{Rajiv Ratn Shah} is an Associate Professor in the Department of Computer Science and Engineering at IIIT-Delhi and Head of the MIDAS Research Lab. Prior to joining IIIT-Delhi, he was a Research Fellow at Singapore Management University and earned his Ph.D. from the National University of Singapore. His research interests include multimodal AI, natural language processing, computer vision, and human-centered computing. More information at \url{https://midas.iiitd.ac.in/}.
\end{IEEEbiography}
\begin{IEEEbiography}[{\includegraphics[width=1in,height=1.25in,clip,keepaspectratio]{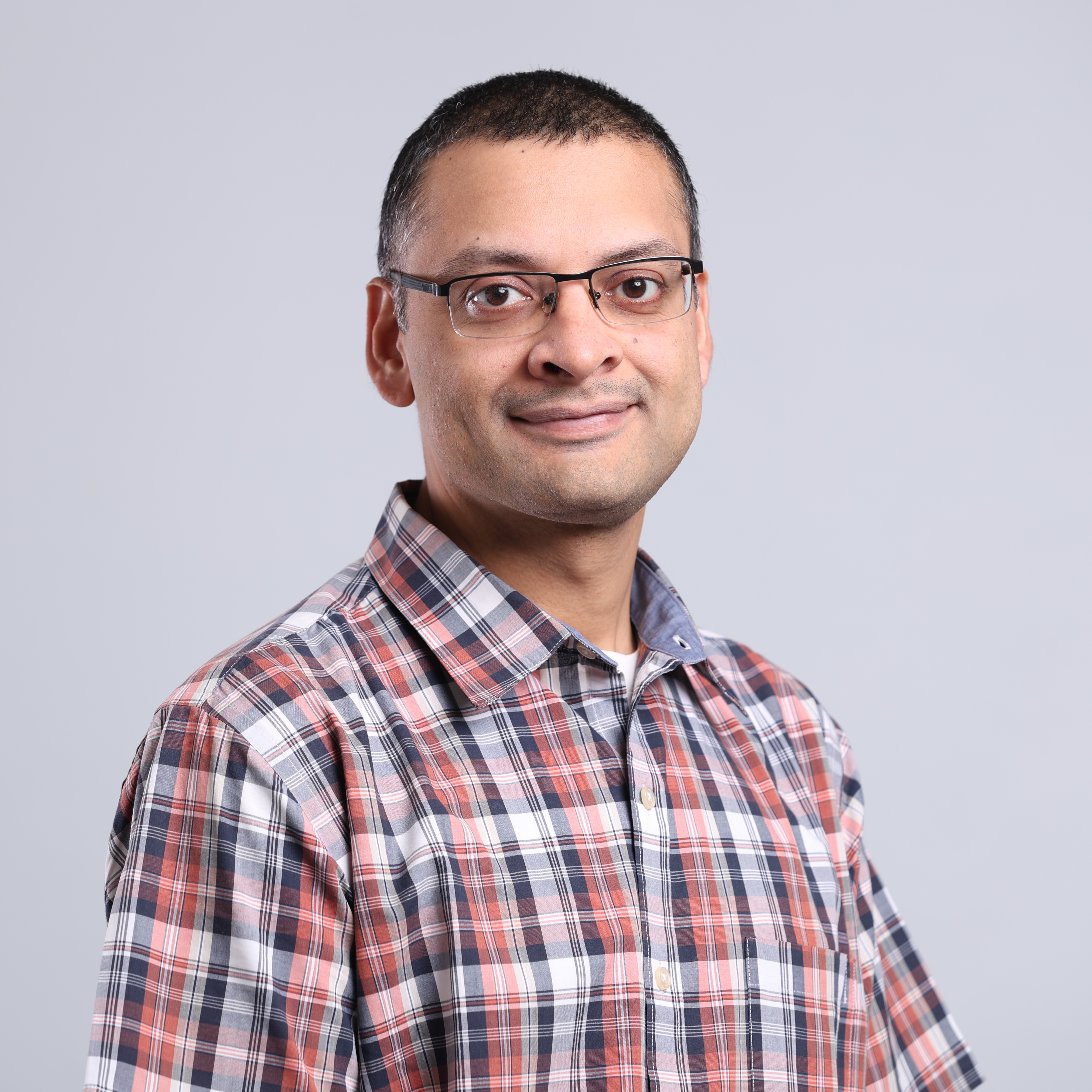}}]{Raghava Mutharaju} is an Associate Professor in the Data Science department and is part of the Mehta Family School of Data Science and AI at IIT Palakkad, Kerala, India. He leads the Knowledgeable Computing and Reasoning (KRaCR) Lab. Earlier, we worked at  IIIT-Delhi, GE Research, IBM Research, Bell Labs, and Xerox Research. His research interests include Knowledge Graphs, ontology modelling, and explainable AI. He has published at several venues, such as AAAI, IJCAI, ACL, and ISWC. More information at \url{https://kracr.github.io/}.
\end{IEEEbiography}

\vfill

\newpage
\appendix

\subsection{KG Laplacian}
The theorems here are directly extended from MagNet~\cite{zhang2021magnet}, and mentioned here for completeness. We first prove that the KG Laplacian for relation $r$ is positive, semi-definite for both the normalized and the un-normalized Laplacian. We then show that the normalized KG Laplacian lies in $[0, 2]$, as for the Laplacian of undirected graphs. Other properties of the KG Laplacian can be extended from these proofs with the context of~\cite{zhang2021magnet}.
\begin{theorem}\label{thm: posdef}
Let $G(\mathcal{V},\mathcal{E},\mathcal{R})$ is a KG with a set of nodes $\mathcal{V}$, a set of relations $\mathcal{R}$, and a triplet set 
$\mathcal{E} = \{(e_s, r, e_o) | e_s, e_o \in \mathcal{V}, r \in \mathcal{R}\}$, and $|\mathcal{V}|=N$. Then, for all $g\geq 0,$ and $r \in \mathcal{R}$ $\boldsymbol{L}_{un}^{r(g)}$ and their normalized counterpart 
$\boldsymbol{L}_{n}^{r(g)}$ are positive semidefinite.
\end{theorem}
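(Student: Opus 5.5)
I will follow the argument of MagNet~\cite{zhang2021magnet}. First I check that $\mathbf{L}_{un}^{r(g)}$ is Hermitian. Since $\mathbf{A}^r_s$ is real symmetric, $\mathbf{D}^r_s$ is real diagonal, and $\mathbf{\Theta}_r^{(g)}$ is real antisymmetric by Equation~\ref{eqn:thetaf}, we have $\overline{\exp(i_r\mathbf{\Theta}_r^{(g)}(v,u))}=\exp(i_r\mathbf{\Theta}_r^{(g)}(u,v))$. Hence $\mathbf{H}^{r(g)}$ is Hermitian, and so is $\mathbf{L}_{un}^{r(g)}=\mathbf{D}^r_s-\mathbf{H}^{r(g)}$. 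Consequently, for every $\mathbf{x}\in\mathbb{C}^N$ the quadratic form $\mathbf{x}^\dagger \mathbf{L}_{un}^{r(g)}\mathbf{x}$ is real, and it suffices to show that it is nonnegative.

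The key step is the standard sum-of-squares identity. I write $\mathbf{x}^\dagger\mathbf{D}^r_s\mathbf{x}=\sum_{u,v}\mathbf{A}^r_s(u,v)|x_u|^2$ and symmetrize it as $\frac12\sum_{u,v}\mathbf{A}^r_s(u,v)(|x_u|^2+|x_v|^2)$. The off-diagonal part is $\sum_{u,v}\mathbf{A}^r_s(u,v)e^{i\Theta(u,v)}\bar x_u x_v$, which by Hermitian symmetry equals $\frac12\sum_{u,v}\mathbf{A}^r_s(u,v)\,2\,\mathrm{Re}(e^{i\Theta(u,v)}\bar x_u x_v)$. Combining the two gives
\begin{equation*}
\mathbf{x}^\dagger \mathbf{L}_{un}^{r(g)}\mathbf{x}=\frac12\sum_{u,v\in\mathcal{V}}\mathbf{A}^r_s(u,v)\,\bigl|x_u-e^{i\Theta(u,v)}x_v\bigr|^2\ \ge 0,
\end{equation*}
since $\mathbf{A}^r_s\ge 0$ entrywise. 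This holds for all $g\ge 0$; the argument never uses $g\le 0.25$. The main thing to be careful about here is the conjugation convention (whether the factor is $e^{i\Theta(u,v)}$ or $e^{-i\Theta(u,v)}$). Either way the result is a sum of squared moduli, so I will fix a convention and verify it on a single edge.

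For the normalized Laplacian, $\mathbf{D}_z$ is a nonnegative real diagonal matrix. If some node has $\mathbf{D}_z(u,u)=0$, I adopt the usual convention that the corresponding entry of $\mathbf{D}_z^{-1/2}$ is $0$; such a node is isolated in every relation, so its row and column of $\mathbf{L}_{un}^{r(g)}$ vanish anyway. With this convention $\mathbf{D}_z^{-1/2}$ is real diagonal and hence Hermitian. So $\mathbf{L}_n^{r(g)}=\mathbf{D}_z^{-1/2}\mathbf{L}_{un}^{r(g)}\mathbf{D}_z^{-1/2}$ is a congruence of a Hermitian PSD matrix, and is therefore Hermitian. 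Setting $\mathbf{y}=\mathbf{D}_z^{-1/2}\mathbf{x}$ gives $\mathbf{x}^\dagger\mathbf{L}_n^{r(g)}\mathbf{x}=\mathbf{y}^\dagger\mathbf{L}_{un}^{r(g)}\mathbf{y}\ge0$. This completes the proof. Normalizing by the whole-graph degree $\mathbf{D}_z$ instead of $\mathbf{D}^r_s$ does not affect the argument, because congruence by any real diagonal matrix preserves positive semidefiniteness.
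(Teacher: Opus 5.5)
Your proof is correct and is essentially the paper's argument: Hermiticity makes the quadratic form real, the degree term is symmetrized into an edge sum, and the normalized case follows from the substitution $\mathbf{y}=\mathbf{D}_z^{-1/2}\mathbf{x}$. The one difference is in your favor: you finish with the exact identity $\mathbf{x}^\dagger\mathbf{L}_{un}^{r(g)}\mathbf{x}=\frac{1}{2}\sum_{u,v}\mathbf{A}^r_s(u,v)|x_u-e^{i\Theta(u,v)}x_v|^2$, whereas the paper writes the cross term as $\mathbf{x}(u)\overline{\mathbf{x}(v)}\cos(\mathbf{\Theta}_r^{(g)}(u,v))$, which drops a $\sin$ contribution for complex $\mathbf{x}$, and then bounds it by $|\mathbf{x}(u)||\mathbf{x}(v)|$ to get $\sum_{u,v}\mathbf{A}^r_s(u,v)(|\mathbf{x}(u)|-|\mathbf{x}(v)|)^2\geq 0$; the paper's conclusion still holds because the true cross term obeys the same bound.
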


\begin{proof}
Let $\mathbf{x}\in\mathbb{C}^N$. For a relation $r$ we first note that since $\mathbf{L}^{r(g)}_{un}$ is Hermitian we have $\text{Imag}(\mathbf{x}^\dagger\mathbf{L}^{(g)}_{un}\mathbf{x})=0$. Next, we use the definition of $\mathbf{D}^r_s$ and the fact that $\mathbf{A}^r_s$ is symmetric to observe that \cite{zhang2021magnet,he2022msgnn}.
\begingroup
\scriptsize            
\allowdisplaybreaks
\begin{align}
2\text{Real}&\left(\mathbf{x}^\dagger\mathbf{L}^{r(g)}_{un}\mathbf{x}\right)
    =2\sum_{u,v=1}^N\mathbf{D}^r_s(u,v)\mathbf{x}(u)\overline{\mathbf{x}(v)}\nonumber\\
    &-2\sum_{u,v=1}^N \mathbf{A}^r_s(u,v)\mathbf{x}(u)\overline{\mathbf{x}(v)} \cos(\boldsymbol{\Theta}_r^{(g)} (u,v))\nonumber\\
    =2&\sum_{u=1}^N \mathbf{D}^r_s(u,u)\mathbf{x}(u)\overline{\mathbf{x}(u)}\nonumber\\
    &-2\sum_{u,v=1}^N \mathbf{A}^r_s(u,v)\mathbf{x}(u)\overline{\mathbf{x}(v)} \cos(\boldsymbol{\Theta}_r^{(g)} (u,v))\nonumber\\
    =2&\sum_{u,v=1}^N \mathbf{A}^r_s(u,v)|\mathbf{x}(u)|^2\nonumber\\
    &-2\sum_{u,v=1}^N \mathbf{A}^r_s(u,v)\mathbf{x}(u)\overline{\mathbf{x}(v)} \cos(\boldsymbol{\Theta}_r^{(g)} (u,v))\nonumber\\
    =&\sum_{u,v=1}^N \mathbf{A}^r_s(u,v)|\mathbf{x}(u)|^2+
    \sum_{u,v=1}^N \mathbf{A}^r_s(v,u)|\mathbf{x}(v)|^2\nonumber\\
    &-2\sum_{u,v=1}^N \mathbf{A}^r_s(u,v)\mathbf{x}(u)\overline{\mathbf{x}(v)} \cos(\boldsymbol{\Theta}_r^{(g)} (u,v))\nonumber\\
    =&\sum_{u,v=1}^N \mathbf{A}^r_s(u,v)|\mathbf{x}(u)|^2+
    \sum_{u,v=1}^N \mathbf{A}^r_s(u,v)|\mathbf{x}(v)|^2\nonumber\\
    &-2\sum_{u,v=1}^N \mathbf{A}^r_s(u,v)\mathbf{x}(u)\overline{\mathbf{x}(v)} \cos(\boldsymbol{\Theta}_r^{(g)} (u,v))\nonumber\\
    =&\sum_{u,v=1}^N \mathbf{A}^r_s(u,v)\nonumber\\
    &\left(|\mathbf{x}(u)|^2+
    |\mathbf{x}(v)|^2-2\mathbf{x}(u)\overline{\mathbf{x}(v)} \cos(\boldsymbol{\Theta}_r^{(g)} (u,v))\right)\label{eqn: quad form}\\
    \geq&\sum_{u,v=1}^N \mathbf{A}^r_s(u,v)\left(|\mathbf{x}(u)|^2+
    |\mathbf{x}(v)|^2-2|\mathbf{x}(u)||\mathbf{x}(v)|\right)\nonumber\\
    =&\sum_{u,v=1}^N \mathbf{A}^r_s(u,v)(|\mathbf{x}(u)|-|\mathbf{x}(v)|)^2\geq0.\nonumber
\end{align}
\endgroup
Thus, each $\mathbf{L}_{un}^{r(g)}$ is positive semidefinite, for $r \in R$. For the normalized magnetic Laplacian, we note that
\begin{equation}\label{eqn: divide by D}
\mathbf{L}^{r(g)}_{n}=\boldsymbol{D}_z^{-1/2}\mathbf{L}^{r(g)}_{un}\boldsymbol{D}_z^{-1/2},\;\;\;\mathbf{D}_z(u,u) = \sum_{r\in\mathcal{R}}\sum_{v \in \mathcal{V}} \mathbf{A}^r_s(u, v)
\end{equation}
We are using $D_z$ for normalization in place of $D_s^r$, to get normalized value influenced by the complete graph rather than a particular relation.
Thus, letting $\mathbf{y}=\mathbf{D}_z^{-1/2}\mathbf{x},$ the fact that $\mathbf{D}_z$ is diagonal implies
\begin{equation*}
\mathbf{x}^\dagger\mathbf{L}^{r(g)}_n\mathbf{x}=\mathbf{x}^\dagger\boldsymbol{D}_z^{-1/2}\mathbf{L}^{r(g)}_{un}\boldsymbol{D}_z^{-1/2}\mathbf{x}=\mathbf{y}^\dagger\mathbf{L}^{r(g)}_{un}\mathbf{y}\geq 0.
\end{equation*}
\end{proof}
\begin{theorem}\label{thm: normal02} Let $G(\mathcal{V},\mathcal{E},\mathcal{R})$ is a KG with a set of nodes $\mathcal{V}$, a set of relations $\mathcal{R}$, and a triplet set $\mathcal{E} = \{(e_s, r, e_o) | e_s, e_o \in \mathcal{V}, r \in \mathcal{R}\}$. Then, for all $g\geq 0$, and $r \in \mathcal{R}$ the eigenvalues of the normalized magnetic Laplacian $\mathbf{L}_{n}^{r(g)}$ are contained in the interval $[0,2]$.
\end{theorem}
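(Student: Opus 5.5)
The plan is to combine Theorem~\ref{thm: posdef}, which gives the lower bound, with a Rayleigh-quotient estimate that gives the upper bound. Hermiticity is needed first. $\mathbf{A}^r_s$ is real symmetric, $\mathbf{\Theta}_r^{(g)}$ is antisymmetric, and $\mathbf{D}_z$ is real diagonal, so $\mathbf{L}_n^{r(g)}$ is Hermitian. Its eigenvalues are therefore real, and the largest one equals $\max_{\mathbf{x}\neq 0}\mathbf{x}^\dagger\mathbf{L}^{r(g)}_n\mathbf{x}/\mathbf{x}^\dagger\mathbf{x}$. Theorem~\ref{thm: posdef} already shows every eigenvalue is $\geq 0$, so the remaining task is to show $\mathbf{x}^\dagger\mathbf{L}^{r(g)}_n\mathbf{x}\le 2\,\mathbf{x}^\dagger\mathbf{x}$.

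\textbf{Substitution.} As in the proof of Theorem~\ref{thm: posdef}, set $\mathbf{y}=\mathbf{D}_z^{-1/2}\mathbf{x}$, so that $\mathbf{x}^\dagger\mathbf{L}^{r(g)}_n\mathbf{x}=\mathbf{y}^\dagger\mathbf{L}^{r(g)}_{un}\mathbf{y}$.

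\textbf{Expansion of the quadratic form.} I will redo the symmetric expansion from that proof, writing the cross term carefully as a real part:
\begin{equation*}
2\,\mathbf{y}^\dagger\mathbf{L}^{r(g)}_{un}\mathbf{y}=\sum_{u,v}\mathbf{A}^r_s(u,v)\Bigl(|\mathbf{y}(u)|^2+|\mathbf{y}(v)|^2-2\,\text{Real}\bigl(\overline{\mathbf{y}(u)}\mathbf{y}(v)e^{i\mathbf{\Theta}_r^{(g)}(u,v)}\bigr)\Bigr).
\end{equation*}

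\textbf{Upper bound.} The real part is at least $-|\mathbf{y}(u)||\mathbf{y}(v)|$, so each summand is at most $(|\mathbf{y}(u)|+|\mathbf{y}(v)|)^2\le 2(|\mathbf{y}(u)|^2+|\mathbf{y}(v)|^2)$. Using the symmetry of $\mathbf{A}^r_s$ and the definition of $\mathbf{D}^r_s$, this yields
\begin{equation*}
\mathbf{y}^\dagger\mathbf{L}^{r(g)}_{un}\mathbf{y}\le 2\sum_u \mathbf{D}^r_s(u,u)|\mathbf{y}(u)|^2 .
\end{equation*}

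\textbf{Comparison with $\mathbf{D}_z$.} This is where the non-standard normalization matters. By \eqref{eqn:dz}, $\mathbf{D}_z(u,u)=\sum_{r'}\mathbf{D}^{r'}_s(u,u)\ge \mathbf{D}^r_s(u,u)$, because all entries of every $\mathbf{A}^{r'}_s$ are nonnegative. Hence
\begin{equation*}
\mathbf{y}^\dagger\mathbf{L}^{r(g)}_{un}\mathbf{y}\le 2\sum_u\mathbf{D}_z(u,u)|\mathbf{y}(u)|^2=2\,\mathbf{x}^\dagger\mathbf{x}.
\end{equation*}
Since $\mathbf{D}_z$ dominates the per-relation degree, normalizing by it can only shrink the spectrum relative to normalizing by $\mathbf{D}^r_s$. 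The bound $2$ therefore survives, though it is no longer necessarily tight.

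\textbf{Main obstacle.} The step I expect to need care is well-definedness of $\mathbf{D}_z^{-1/2}$ for isolated nodes, i.e.\ $\mathbf{D}_z(u,u)=0$. I would adopt the standard convention $\mathbf{D}_z^{-1/2}(u,u)=0$ for such nodes. Then $\mathbf{D}_z(u,u)=0$ forces $\mathbf{D}^r_s(u,u)=0$ and $\mathbf{A}^r_s(u,\cdot)=0$, so the corresponding row and column of $\mathbf{L}^{r(g)}_n$ vanish and contribute only the eigenvalue $0\in[0,2]$. On the remaining coordinates the argument above applies verbatim.
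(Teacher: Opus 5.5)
Your proposal is correct and follows essentially the same route as the paper: Courant--Fischer for the top eigenvalue, the substitution $\mathbf{y}=\mathbf{D}_z^{-1/2}\mathbf{x}$, bounding the cross term to get $\mathbf{y}^\dagger\mathbf{L}^{r(g)}_{un}\mathbf{y}\le 2\sum_u\mathbf{D}^r_s(u,u)|\mathbf{y}(u)|^2$, and then using $\mathbf{D}^r_s(u,u)\le\mathbf{D}_z(u,u)$. Your write-up is in fact tidier than the paper's in three places: you state the cross term as a genuine real part $\text{Real}\bigl(\overline{\mathbf{y}(u)}\mathbf{y}(v)e^{i\mathbf{\Theta}_r^{(g)}(u,v)}\bigr)$ instead of the paper's $\mathbf{x}(u)\overline{\mathbf{x}(v)}\cos(\cdot)$, you keep the factor of $2$ consistent where the paper's chain is loose about $\tfrac12$ and switches between $\mathbf{x}$ and $\mathbf{y}$, and you handle zero-degree nodes explicitly.
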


\begin{proof}
By Theorem \ref{thm: posdef}, we know that $\mathbf{L}^{r(g)}_{n}$ has real, nonnegative eigenvalues. Therefore, we need to show that the lead eigenvalue, $\lambda_{N},$ is less than or equal to 2.
The Courant-Fischer theorem shows that 
\begin{equation*}
    \lambda_N=\max_{\mathbf{x}\neq 0}\frac{\mathbf{x}^\dagger\mathbf{L}^{r(g)}_{n}\mathbf{x}}{\mathbf{x}^\dagger\mathbf{x}}.
\end{equation*}
Therefore, using \eqref{eqn: divide by D} and setting $\mathbf{y}=\mathbf{D}_z^{-1/2}\mathbf{x}$, we have 
\begin{equation*}
    \lambda_N=\max_{\mathbf{x}\neq 0}\frac{\mathbf{x}^\dagger\mathbf{D}_z^{-1/2}\mathbf{L}^{r(g)}_{un}\mathbf{D}_z^{-1/2}\mathbf{x}}{\mathbf{x}^\dagger\mathbf{x}} =\max_{\mathbf{y}\neq 0}\frac{\mathbf{y}^\dagger\mathbf{L}^{(g)}_{un}\mathbf{y}}{\mathbf{y}^\dagger\mathbf{D}_z\mathbf{y}}.
\end{equation*}
First, we observe that since $\mathbf{D}_z$ is diagonal, we have 
\begin{equation*}
\mathbf{y}^\dagger\mathbf{D}_z\mathbf{y}=\sum_{u,v=1}^N \mathbf{D}_z(u,v)\mathbf{y}(u)\overline{\mathbf{y}(v)}=\sum_{u=1}^N \mathbf{D}_z(u,u)|\mathbf{y}(u)|^2
\end{equation*}
Next, we note that by \eqref{eqn: quad form}, we have
{\scriptsize
\begin{align*}
    \mathbf{y}^\dagger&\mathbf{L}^{r(g)}_{un}\mathbf{y}
    =\sum_{u,v=1}^N \mathbf{A}^r_s(u,v)(|\mathbf{x}(u)|^2+
    |\mathbf{x}(v)|^2\\&-2\mathbf{x}(u)\overline{\mathbf{x}(v)} \cos(\boldsymbol{\Theta}_r^{(g)} (u,v)))\\
    &\leq \frac{1}{2}\sum_{u,v=1}^N \mathbf{A}^r_s(u,v)(|\mathbf{x}(u)|+|\mathbf{x}(v)|)^2\\&\leq  \sum_{u,v=1}^N \mathbf{A}^r_s(u,v)(|\mathbf{x}(u)|^2+|\mathbf{x}(v)|^2).
\end{align*}}
Therefore, since $\mathbf{A^r_s}$ is symmetric, we have 
{\scriptsize
\begin{align*}
    \mathbf{y}^\dagger\mathbf{L}^{r(g)}_{un}&\mathbf{y} 
    \leq 2\sum_{u,v=1}^N \mathbf{A}^r_s(u,v)|\mathbf{x}(u)|^2=2\sum_{u=1}^N|\mathbf{x}(u)|^2\left(\sum_{v=1}^N \mathbf{A}^r_s(u,v)\right)\\
    &=2\sum_{u=1}^N\mathbf{D}^r_s(u,u)|\mathbf{x}(u)|^2\leq 2\sum_{u=1}^N\mathbf{D}_z(u,u)|\mathbf{x}(u)|^2=2    \mathbf{y}^\dagger\mathbf{D}_z\mathbf{y}.
\end{align*}}
\end{proof}
\setcounter{table}{0}
\subsection{Query Statistics and Results}
\begin{table*}[!t]
\centering
\caption{Number of queries generated for each dataset ratio and query type.}
\label{tab:all_queries}
\begin{adjustbox}{width=\textwidth}
\begin{tabular}{lrrrrrrrrrrrrrrrr}\toprule
Ratio & Graph & \multicolumn{1}{c}{\textbf{1p}} & \multicolumn{1}{c}{\textbf{2p}} & \multicolumn{1}{c}{\textbf{3p}} & \multicolumn{1}{c}{\textbf{2i}} & \multicolumn{1}{c}{\textbf{3i}} & \multicolumn{1}{c}{\textbf{pi}} & \multicolumn{1}{c}{\textbf{ip}} & \multicolumn{1}{c}{\textbf{2u}} & \multicolumn{1}{c}{\textbf{up}} & \multicolumn{1}{c}{\textbf{2in}} & \multicolumn{1}{c}{\textbf{3in}} & \multicolumn{1}{c}{\textbf{inp}} & \multicolumn{1}{c}{\textbf{pin}} & \multicolumn{1}{c}{\textbf{pni}} \\\midrule
\multirow{3}{*}{300\%} &training & 10,066 &15,000 &15,000 &40,000 &40,000 &50,000 &50,000 &50,000 &50,000 &10,000 &10,000 &30,000 &30,000 &30,000 \\
&validation & 17,088 &50,000 &50,000 &50,000 &50,000 &50,000 &50,000 &50,000 &50,000 &10,000 &10,000 &10,000 &10,000 &10,000 \\
&test & 15,851 &50,000 &50,000 &50,000 &50,000 &50,000 &50,000 &50,000 &50,000 &10,000 &10,000 &10,000 &10,000 &10,000 \\ 
\midrule
\multirow{3}{*}{217\%} &training & 18,453 &15,000 &15,000 &40,000 &40,000 &50,000 &50,000 &50,000 &50,000 &15,000 &15,000 &50,000 &50,000 &40,000 \\
&validation & 17,516 &50,000 &50,000 &50,000 &50,000 &50,000 &50,000 &50,000 &50,000 &10,000 &10,000 &10,000 &10,000 &10,000 \\
&test & 17,153 &50,000 &50,000 &50,000 &50,000 &50,000 &50,000 &50,000 &50,000 &10,000 &10,000 &10,000 &10,000 &10,000 \\
\midrule
\multirow{3}{*}{175\%} &training & 30,903 &15,000 &15,000 &40,000 &40,000 &50,000 &50,000 &50,000 &50,000 &15,000 &15,000 &50,000 &50,000 &40,000 \\
&validation & 17,797 &50,000 &50,000 &50,000 &50,000 &50,000 &50,000 &50,000 &50,000 &10,000 &10,000 &10,000 &10,000 &10,000 \\
&test & 17,188 &50,000 &50,000 &50,000 &50,000 &50,000 &50,000 &50,000 &50,000 &10,000 &10,000 &10,000 &10,000 &10,000 \\
\midrule
\multirow{3}{*}{150\%} & training & 42,203 &30,000 &30,000 &50,000 &50,000 &50,000 &50,000 &50,000 &50,000 &30,000 &30,000 &50,000 &50,000 &50,000 \\
&validation & 16,916 &50,000 &50,000 &50,000 &50,000 &50,000 &50,000 &50,000 &50,000 &10,000 &10,000 &10,000 &10,000 &10,000 \\
&test & 17,379 &50,000 &50,000 &50,000 &50,000 &50,000 &50,000 &50,000 &50,000 &10,000 &10,000 &10,000 &10,000 &10,000 \\ 
\midrule
\multirow{3}{*}{134\%} &training & 57,242 &50,000 &50,000 &50,000 &50,000 &50,000 &50,000 &50,000 &50,000 &50,000 &40,000 &50,000 &50,000 &50,000 \\
&validation & 15,614 &50,000 &50,000 &50,000 &50,000 &50,000 &50,000 &50,000 &50,000 &10,000 &10,000 &10,000 &10,000 &10,000 \\
&test & 15,544 &50,000 &50,000 &50,000 &50,000 &50,000 &50,000 &50,000 &50,000 &10,000 &10,000 &10,000 &10,000 &10,000 \\
\midrule
\multirow{3}{*}{122\%} &training & 73,373 &50,000 &50,000 &50,000 &50,000 &50,000 &50,000 &50,000 &50,000 &50,000 &40,000 &50,000 &50,000 &50,000 \\
&validation & 13,842 &50,000 &50,000 &50,000 &50,000 &50,000 &50,000 &50,000 &50,000 &5,000 &5,000 &5,000 &5,000 &5,000 \\
&test & 13,556 &50,000 &50,000 &50,000 &50,000 &50,000 &50,000 &50,000 &50,000 &5,000 &5,000 &5,000 &5,000 &5,000 \\ 
\midrule
\multirow{3}{*}{113\%} &training & 93,861 &50,000 &50,000 &50,000 &50,000 &50,000 &50,000 &50,000 &50,000 &50,000 &50,000 &50,000 &50,000 &50,000 \\
&validation & 10,193 &10,000 &10,000 &10,000 &10,000 &10,000 &10,000 &10,000 &10,000 &5,000 &5,000 &5,000 &5,000 &5,000 \\
&test & 10,412 &10,000 &10,000 &10,000 &10,000 &10,000 &10,000 &10,000 &10,000 &5,000 &5,000 &5,000 &5,000 &5,000 \\
\midrule
\multirow{3}{*}{106\%} &training & 109,976 &50,000 &50,000 &50,000 &50,000 &50,000 &50,000 &50,000 &50,000 &50,000 &50,000 &50,000 &50,000 &50,000 \\
&validation & 5,598 &10,000 &10,000 &10,000 &10,000 &10,000 &10,000 &10,000 &10,000 &1,000 &1,000 &1,000 &1,000 &1,000 \\
&test & 6,948 &10,000 &10,000 &10,000 &10,000 &10,000 &10,000 &10,000 &10,000 &1,000 &1,000 &1,000 &1,000 &1,000 \\ 
\midrule
        \multicolumn{16}{c}{\emph{WikiKG}} \\ \midrule
\multirow{3}{*}{133\%} & training & 10,000 & 10,000 & 10,000 & 10,000 & 10,000 & 10,000 & 10,000 & 10,000 & 10,000 & - & - & - & - & -  \\        &validation & 10,000 & 10,000 & 10,000 & 10,000 & 10,000 & 10,000 & 10,000 & 10,000 & 10,000 & - & - & - & - & - \\
&test & 10,000 & 10,000 & 10,000 & 10,000 & 10,000 & 10,000 & 10,000 & 10,000 & 10,000 & - & - & - & - & - \\ 
\bottomrule
\end{tabular}
\end{adjustbox}
\end{table*}
The number of queries generated for each dataset are provided in Table \ref{tab:all_queries}. The statistics mentioned as the train graph are for the graph $\mathcal{V}_{train_0}$. 
\end{document}